\documentclass{article}

\PassOptionsToPackage{round}{natbib}
\usepackage{colm2026}
\usepackage[utf8]{inputenc}
\usepackage[T1]{fontenc}
\usepackage{microtype}

\usepackage{amsmath}
\usepackage{amssymb}
\usepackage{amsfonts}
\usepackage{mathtools}
\usepackage{amsthm}
\usepackage{nicefrac}
\usepackage{physics}
\usepackage{bbm}

\theoremstyle{plain}
\newtheorem{theorem}{Theorem}[section]
\newtheorem{proposition}{Proposition}[section]
\newtheorem{lemma}{Lemma}[section]
\newtheorem{corollary}{Corollary}[section]
\theoremstyle{definition}
\newtheorem{definition}{Definition}[section]
\newtheorem{assumption}{Assumption}[section]
\theoremstyle{remark}
\newtheorem{remark}{Remark}[section]

\usepackage{natbib}

\usepackage{hyperref}
\usepackage[all]{hypcap}
\usepackage{xcolor}
\definecolor{teal}{HTML}{008080}
\hypersetup{
  colorlinks,
  citecolor=teal,
  linkcolor=red!80!black,
  urlcolor=blue
}

\usepackage[nameinlink,capitalize,noabbrev]{cleveref}
\crefname{appendix}{Appendix}{Appendices}
\Crefname{appendix}{Appendix}{Appendices}
\crefname{theorem}{Theorem}{theorems}
\Crefname{theorem}{Theorem}{Theorems}
\crefname{proposition}{Proposition}{propositions}
\Crefname{proposition}{Proposition}{Propositions}
\crefname{lemma}{Lemma}{lemmas}
\Crefname{lemma}{Lemma}{Lemmas}
\crefname{corollary}{Corollary}{corollaries}
\Crefname{corollary}{Corollary}{Corollaries}
\crefname{definition}{Definition}{definitions}
\Crefname{definition}{Definition}{Definitions}
\crefname{assumption}{Assumption}{assumptions}
\Crefname{assumption}{Assumption}{Assumptions}
\crefname{remark}{Remark}{remarks}
\Crefname{remark}{Remark}{Remarks}

\makeatletter
\g@addto@macro\appendix{\crefalias{section}{appendix}}
\makeatother

\usepackage{graphicx}
\usepackage{subcaption}
\usepackage{wrapfig}
\usepackage{float}
\usepackage{pdfpages}
\usepackage{booktabs}

\usepackage{verbatim}
\usepackage{comment}
\usepackage{xspace}

\usepackage[acronym,nowarn]{glossaries}
\glsdisablehyper
\makeglossaries

\usepackage{tikz,siunitx}
\usetikzlibrary{shapes.geometric,shapes.symbols}

\usepackage[most]{tcolorbox}
\definecolor{mylightgray}{gray}{0.95}
\definecolor{boxcolor}{HTML}{faf9f5}
\definecolor{takeawaybg}{HTML}{DCEBFF}
\definecolor{takeawayframe}{HTML}{5E90DA}
\definecolor{takeawaytitle}{HTML}{234C86}
\newcommand{\takeawayextratitle}{}
\newtcolorbox{mybox}{colback=mylightgray,colframe=mylightgray,top=0.8pt,bottom=0.8pt,right=1.8pt,left=1.8pt}
\newtcolorbox[auto counter]{takeaway}[1][]{%
  enhanced,
  breakable,
  colback=takeawaybg,       %
  colframe=takeawayframe,   %
  boxrule=1.25pt,
  arc=3pt,                  %
  left=2mm,right=2mm,top=1mm,bottom=1mm,
  before skip=10pt, after skip=10pt,
  takeaway title/.store in=\takeawayextratitle,
  takeaway title=,
  title={Takeaway~\thetcbcounter\ifx\takeawayextratitle\empty\else:~\takeawayextratitle\fi},
  colbacktitle=takeawaytitle,%
  coltitle=white,           %
  fonttitle=\bfseries\small,
  attach boxed title to top left={yshift=-1.2mm, xshift=2mm},
  boxed title style={
    enhanced,
    arc=3pt,
    top=0.5mm, bottom=0.5mm, left=1mm, right=1mm,
    boxrule=0pt,           %
    interior engine=empty, %
  },
  #1                        %
}

\usepackage[textsize=tiny]{todonotes}

\newacronym{AU}{AU}{Aleatoric Uncertainty}
\newacronym{AUC}{AUC}{area under the curve}
\newacronym{BDL}{BDL}{Bayesian Deep Learning}
\newacronym[firstplural=Bayesian neural networks]{BNN}{BNN}{Bayesian neural network}
\newacronym{CNN}{CNN}{convolutional neural network}
\newacronym{DE}{DE}{Deep Ensembles}
\newacronym[firstplural=Deep Gaussian Processes]{DGP}{DGP}{Deep Gaussian Process}
\newacronym[firstplural=deep neural networks]{DNN}{DNN}{deep neural network}
\newacronym{ECE}{ECE}{expected calibration error}
\newacronym{ELBO}{ELBO}{evidence lower bound}
\newacronym{ELL}{ELL}{expected log likelihood}
\newacronym{ESS}{ESS}{Effective Sample Size}
\newacronym{EU}{EU}{Epistemic Uncertainty}
\newacronym{GLM}{GLM}{Generalized linear model}
\newacronym{GP}{GP}{Gaussian process}
\newacronym{GGN}{GGN}{Generalized Gauss-Newton}
\newacronym{HMC}{HMC}{Hamiltonian Monte Carlo}
\newacronym{ICL}{ICL}{In-Context Learning}
\newacronym{IVON}{IVON}{Improved Variational Online Newton}
\newacronym{KL}{KL}{Kullback-Leibler divergence}
\newacronym{LLA}{LLA}{Linearized Laplace Approximation}
\newacronym{LLC}{LLC}{Local Learning Coefficient}
\newacronym{LL}{LL}{log-likelihood}
\newacronym{LLM}{LLM}{Large Language Model}
\newacronym{MAP}{MAP}{Maximum a posteriori}
\newacronym{MC}{MC}{Monte Carlo}
\newacronym{MCD}{MCD}{Monte Carlo Dropout}
\newacronym{MCMC}{MCMC}{Markov chain Monte Carlo}
\newacronym{MI}{MI}{Mutual Information}
\newacronym{MLP}{MLP}{multilayer perceptron}
\newacronym{MNLL}{MNLL}{mean negative loglikelihood}
\newacronym{NLL}{NLL}{negative loglikelihood}
\newacronym{NMLL}{NMLL}{Negative Marginal Log Likelihood}
\newacronym{NN}{NN}{neural network}
\newacronym{NTK}{NTK}{Neural Tangent Kernel}
\newacronym{OOD}{OOD}{out-of-distribution}
\newacronym{PSD}{PSD}{Positive Semi-Definite}
\newacronym{RLCT}{RLCT}{Real Log Canonical Threshold}
\newacronym{RELU}{ReLU}{rectified linear unit}
\newacronym{RMSE}{RMSE}{root mean square error}
\newacronym{SAM}{SAM}{Sharpness Aware Minimization}
\newacronym{SGLD}{SGLD}{Stochastic Gradient Langevin Dynamics}
\newacronym{SGHMC}{SGHMC}{Stochastic Gradient Hamiltonian Monte Carlo}
\newacronym{SGD}{SGD}{Stochastic Gradient Descent}
\newacronym{SLT}{SLT}{Singular Learning Theory}
\newacronym{TU}{TU}{Total Uncertainty}
\newacronym{UQ}{UQ}{Uncertainty Quantification}
\newacronym{VI}{VI}{Variational Inference}
\newacronym{WD}{WD}{Wasserstein distance}

\usepackage{math_commands}

\title{
  A Bayesian Perspective on the Role of Epistemic Uncertainty for Delayed Generalization in In-Context Learning
}

\author{%
  Abdessamed Qchohi \\ abdessamed.qchohi@eurecom.fr \\
  EURECOM, France
  \And
  Simone Rossi \\ simone.rossi@eurecom.fr \\
  EURECOM, France
}

\begin{document}
\maketitle

\begin{abstract}
  In-context learning enables transformers to adapt to new tasks from a few examples at inference time, while grokking highlights that this generalization can emerge abruptly only after prolonged training.
  We study task generalization and grokking in in-context learning using a Bayesian perspective, asking what enables the delayed transition from memorization to generalization.
  Concretely, we consider modular arithmetic tasks in which a transformer must infer a latent linear function solely from in-context examples and analyze how predictive uncertainty evolves during training.
  We combine approximate Bayesian techniques to estimate the posterior distribution and we study how uncertainty behaves across training and under changes in task diversity, context length, and context noise.
  We find that epistemic uncertainty collapses sharply when the model groks, making uncertainty a practical label-free diagnostic of generalization in transformers.
  Additionally, we provide theoretical support with a simplified Bayesian linear model, showing that asymptotically both delayed generalization and uncertainty peaks arise from the same underlying spectral mechanism, which links grokking time to uncertainty dynamics.
\end{abstract}

\section{Introduction}
\label{sec:introduction}

\Gls{ICL} has emerged as a powerful paradigm in which \glspl{LLM} can learn to perform new tasks by conditioning on a few examples provided in the input prompt \citep{brown2020language}.
However, despite its empirical success, the underlying mechanisms that enable LLMs to effectively learn from in-context examples remain poorly understood. Recent studies have begun to explore the theoretical foundations of ICL, proposing various hypotheses ranging from implicit Bayesian inference \citep{xie2022explanation,akyurek2023what} to meta-learning frameworks \citep{von_oswald2023transformers}. Yet, a comprehensive Bayesian perspective on how LLMs process and learn from in-context examples is still lacking.
Furthermore, during training of \glspl{LLM}, models often exhibit a phenomenon known as \emph{grokking}, where they suddenly transition from poor to near-perfect performance on a task after extended training, despite having already fit the training data \citep{power2022grokking}.
In this work, we aim to study what enables grokking in \gls{ICL} from a Bayesian perspective and how learning dynamics (and uncertainty measures) evolve during training.
\gls{BDL} provides a principled framework for modeling uncertainty in deep learning models by treating model parameters as random variables and inferring their posterior distributions given the data \citep{gal2016dropout,blundell2015weight,lakshminarayanan2017simple}.
While exact inference is intractable in large-scale models like \glspl{LLM}, various approximate inference techniques, such as \gls{VI} and Laplace approximations, have been successfully applied to capture uncertainty in deep learning \citep{papamarkou2024position}.
By treating the model in a Bayesian framework, we experimentally analyze how uncertainty quantification and posterior distributions over model parameters change as the model groks with the in-context examples.
Following the classic setting, we approch this phenomenon by analyzing the training dynamics of transformers on modular arithmetic tasks \citep{power2022grokking}.
Additionally, we provide a theoretical support for our empirical findings by analyzing the uncertainty dynamics in a simplified model, showing how the transition to generalization corresponds to a collapse in epistemic uncertainty.
\begin{wrapfigure}[18]{r}{0.5\textwidth}
  \vspace{-0.5cm}
  \centering
  \includegraphics[width=0.5\textwidth]{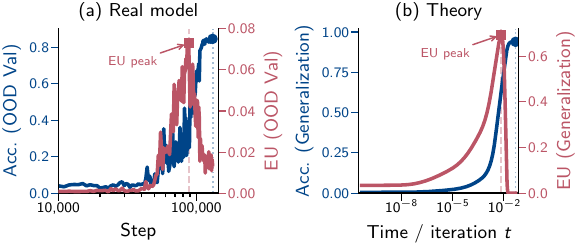}
  \vspace{-0.5cm}
  \caption{
    \textbf{Epistemic uncertainty collapses sharply at the grokking transition.}
    On a decoder-only transformer trained on modular arithmetic tasks, the epistemic uncertainty collapses sharply before the generalization transition.
    This pattern follows the theoretical prediction for a simplified linear model, where the grokking transition corresponds to a spectral shift in the posterior distribution that causes a collapse in epistemic uncertainty.
  }
  \label{fig:teaser}
\end{wrapfigure}
Our contributions are as follows:
(1) We study grokking in \gls{ICL} in a Bayesian framework, using modular arithmetic tasks to analyze how predictive uncertainty evolves as a transformer learns to infer a latent task from in-context examples.
(2) We show empirically that a sharp collapse in epistemic uncertainty coincides with the grokking transition, making it a label-free diagnostic of generalization. We further compare \gls{IVON} and Last-Layer Laplace across training, task diversity, context length, and context noise, highlighting both shared uncertainty patterns and method-specific differences in \gls{OOD} generalization.
(3) We provide theoretical support with a simplified Bayesian linear model, showing that delayed generalization and delayed epistemic-uncertainty peaks arise from the same underlying spectral mechanism, which links grokking time to uncertainty dynamics.
A snapshot of our main empirical and theoretical findings is shown in \cref{fig:teaser}.

\paragraph{Related Works.}

\gls{ICL} has been studied both empirically and theoretically since its emergence in large language models \citep{brown2020language}.
A common theoretical view is that transformers can implement learning or estimation procedures in their forward pass, including implicit Bayesian inference, ridge regression, and gradient-based adaptation \citep{xie2022explanation,akyurek2023what,von_oswald2023transformers,bai2023transformers,xie2025initialization,zhang2025what}.
Complementary work examines reliability under prompting: \citet{zhang2024study} study calibration across shot counts, \citet{ling2024uncertainty} decompose \gls{ICL} uncertainty into epistemic and aleatoric terms, and \citet{wang2025uncertainty} analyze uncertainty in long-context prompting.
A Bayesian framing provides a natural language for connecting these threads: pre-training can be interpreted as learning an implicit prior (or an implicit family of priors) over tasks/concepts, while ICL corresponds to approximate Bayesian inference (or Bayesian model averaging) over latent task variables given the prompt \citep{xie2022explanation, zhang2025what}.
In parallel, \gls{BDL} treats uncertainty as first-class, via priors and posteriors over parameters and predictive distributions as marginalization, and offers tools that make uncertainty trajectories during training and inference measurable \citep[e.g.][]{graves2011practical,blundell2015weight,gal2016dropout,lakshminarayanan2017simple, maddox2019simple,ritter2018scalable}.
On the grokking side, \citet{power2022grokking} introduced the canonical delayed-generalization phenomenon on modular arithmetic. Follow-up work characterized its learning phases and representation dynamics \citep{liu2022towards}, explained aspects of modular addition through the transition out of the \gls{NTK} regime \citep{mohamadi2024why}, and proposed information-theoretic or structural progress measures that can anticipate grokking \citep{clauw2024information,golechha2024progress}.
Closest to our setup, \citet{he2024learning} extend the classic task family to modular linear functions and study both in-distribution and out-of-distribution learning.
While we build on the same experimental set up, our focus is on the Bayesian learning dynamics and uncertainty trajectories during grokking, rather than characterizing the phases of learning or the representational changes.
Notably, we identify a the condition under which the model can generalize in-domain and out-of-domain, and we analyze how the model's uncertainty evolves as it transitions from overfitting to generalization, providing a novel perspective on the grokking phenomenon.
A more detailed review of the related work is available in \cref{sec:related-works-long}.

\section{Preliminaries}
\label{sec:background}

\paragraph{Bayesian inference.}
Rather than treating model parameters as fixed values, Bayesian methods model them as random variables.
Given a dataset $\mathcal{D}$ and model parameters $\mbtheta$, Bayes' rule gives the posterior:
$$
p(\mbtheta \mid \mathcal{D}) = \frac{p(\mathcal{D} \mid \mbtheta)\, p(\mbtheta)}{p(\mathcal{D})}
$$
where $p(\theta)$ is the prior encoding beliefs about the parameters before observing data, $p(\mathcal{D} \mid \mbtheta)$ is the likelihood describing how probable the observed data is under a particular choice of parameters,
and $p(\mathcal{D}) = \int p(\mathcal{D} \mid \mbtheta)\, p(\mbtheta)\, \dd\mbtheta$ is the marginal likelihood, which integrates out the parameters and acts as a normalizing constant ensuring the posterior is a valid probability
distribution.
Predictions for a new input $\mbx^*$ are obtained by
marginalizing over the posterior:
\begin{equation}
  \label{eq:marginalization}
  p(\mby^* \mid \mbx^*, \mathcal{D}) = \int p(\mby^* \mid \mbx^*, \mbtheta)\,
  p(\mbtheta \mid \mathcal{D})\, d\mbtheta
\end{equation}
This predictive distribution captures uncertainty in the model parameters, which is relevant when data is limited or when the model is evaluated on out-of-distribution inputs.
Exact inference is intractable for neural networks since the posterior is high-dimensional and the marginal likelihood
has no closed form.
We therefore need to rely on approximate inference methods.

While the toolbox of approximate inference methods is large and rapidly evolving, we focus on two complementary approaches that are well-suited for our setting: variational inference and Laplace approximation.
\gls{IVON} \citep{shen2024variational} integrates uncertainty estimation into training by maintaining a Gaussian variational posterior over all weights, updated via Newton-like steps at a computational cost comparable to Adam.
The Last-Layer Laplace Approximation \citep{daxberger2021laplace} is a post-hoc method that fits a Gaussian approximation to the posterior of the final linear layer after standard MAP training, using a Kronecker-factored Hessian approximation, leaving all other parameters fixed.
For both methods, predictions are obtained by averaging over $S$ weight samples drawn from $q(\mbtheta)$.
Additional details on these methods are available in \cref{sec:methods}.

\paragraph{Uncertainty decomposition.}
Both approximate inference methods yield a distribution over predictions, enabling a principled decomposition
of predictive uncertainty \citep{wimmer2023quantifying,rosso2025scaling,h_ullermeier2021aleatoric}.
Given $S$ weight samples $\mbtheta^{(s)}$ from the approximate posterior, total uncertainty (TU) is the
entropy of the mean prediction; aleatoric uncertainty (AU) is the mean entropy of individual predictions and captures irreducible label noise; and epistemic uncertainty (EU) is their difference:
\[
  \mathrm{EU}(\mbx) =
  \mathbb{H}\!\left[\frac{1}{S}\sum_{s=1}^{S} p(\mby \mid \mbx, \mbtheta^{(s)})\right] -
  \underbrace{\frac{1}{S}\sum_{s=1}^{S} \mathbb{H}\!\left[p(\mby \mid \mbx, \mbtheta^{(s)})\right]}_{\mathrm{AU}(\mbx)}
\]
Epistemic uncertainty reflects the model's lack of knowledge about the task
and is expected to decrease as the model acquires more information. In the
grokking setting, we use EU as a probe for the generalization transition.

\section{In-Context Learning, Grokking and Uncertainty}
\label{sec:methodology}

In this section, we describe the main experimental results on the relationship between grokking and uncertainty in in-context learning.
We first introduce the experimental setup, including the task distribution, model architecture, and evaluation protocol.
We then present our main empirical findings on how accuracy and epistemic uncertainty behave.
Finally, we analyze how task diversity, context length, and context noise affect both generalization and uncertainty dynamics.

\subsection{Experimental Setup}

\paragraph{Tasks.} We follow \citet{he2024learning} and focus on linear modular
arithmetic tasks of the form $z = ax + by \pmod{p}$, where $(a, b)$ is the
task vector and $(x, y)$ is the input vector, with all values in
$\{0, \ldots, p-1\}$. Modular arithmetic provides a controlled setting for
studying grokking, as the tasks have a clear underlying structure that the
model can either memorize or learn to generalize. The task vector is never
shown to the model, which receives only input-output triples $(x, y, z)$
and must infer the underlying task from the in-context examples. Each
equation is tokenized into three numerical. An example of an input sequence is:
\[
  (x_1, y_1, z_1),\; \ldots,\; (x_{n_\mathrm{ctx}}, y_{n_\mathrm{ctx}},
  z_{n_\mathrm{ctx}}),\; (x_t, y_t, \textbf{?})
\]
During training, the
model predicts every third token $z_i$ as well as the final target $z_t$,
with the cross-entropy loss computed only on these positions. At evaluation time, we measure accuracy on $z_t$ only, which
directly tests whether the model has inferred the correct task from context.
We partition the set of task vectors and input examples into seen and unseen subsets, $\mathcal{T}_\mathrm{id}$, $\mathcal{T}_\mathrm{ood}$ and $\mathcal{X}_\mathrm{train}$, $\mathcal{X}_\mathrm{test}$, respectively.
Crossing these two partitions gives four splits: ID Train ($\mathcal{T}_\mathrm{id}$, $\mathcal{X}_\mathrm{train}$), used for training; ID Val ($\mathcal{T}_\mathrm{id}$, $\mathcal{X}_\mathrm{test}$), which tests generalization to unseen inputs on seen tasks; OOD Train ($\mathcal{T}_\mathrm{ood}$, $\mathcal{X}_\mathrm{train}$), which tests whether the model adapts to unseen tasks on seen inputs; and OOD Val ($\mathcal{T}_\mathrm{ood}$, $\mathcal{X}_\mathrm{test}$), the hardest split, which tests full out-of-distribution generalization to both unseen tasks and unseen inputs.
We train exclusively on ID Train and use the remaining three splits to evaluate generalization.
This construction lets us track whether the model memorizes, generalizes in-distribution, or generalizes out-of-distribution, and at which point grokking occurs in each regime.

\paragraph{Model architecture and training.}
We use a decoder-only Transformer with Rotary Positional Embeddings (RoPE) \citep{su2024roformer} with $d=6$ layers,
embedding dimension $512$, $4$ attention heads, and an MLP widening factor of $4$.
We tie input embedding and output projection weights and apply weight decay for regularization.
Task vectors are sampled via a parallelogram rule, varying one component at a time so the model can build on previously
learned structure, and each batch is constructed so that all tasks in $\mathcal{T}_\mathrm{id}$ share the same input sequences, limiting inter-task interference.
Unless otherwise specified, we train four random seeds and we report medians with $95\%$ quantile intervals across seeds for all metrics and configurations.
Detailed descriptions of the training procedure, hyperparameters, and approximate inference methods are provided in \cref{sec:reproducibility}.

\section{Empirical Results and Analysis}

\begin{figure}[t]
  \centering
  \textbf{IVON}\par\medskip
  \includegraphics[width=0.99\textwidth]{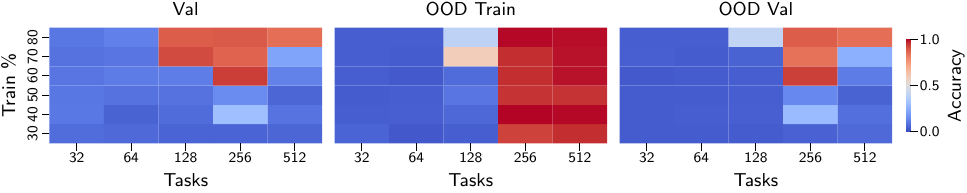}
  \textbf{Laplace}\par\medskip
  \includegraphics[width=0.99\textwidth]{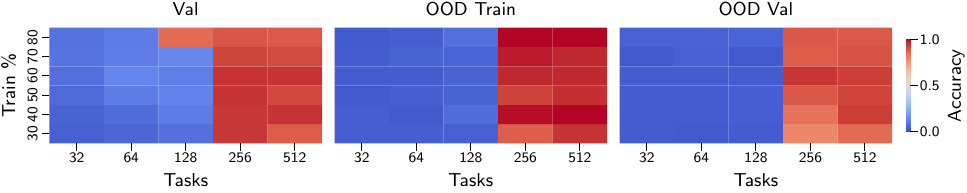}
  \caption{
    \textbf{Accuracy heatmap with IVON and Laplace}.
    For OOD Train, both IVON and Laplace achieve high accuracy starting at $T = 256$ and $T = 512$ for any training data fraction. For OOD Val (the most challenging split), Laplace generalizes more broadly across configurations, while IVON achieves competitive performance only at high training data fractions.
  }
  \label{fig:pretrain-final-acc}
\end{figure}
\paragraph{Grokking across task diversity.}
To study how grokking emerges as a function of task diversity and training data, we train the model across a grid of configurations varying the number of pre-training tasks $T \in \{32, 64, 128, 256, 512\}$ and the fraction
of training data $\alpha \in \{0.3, 0.4, 0.5, 0.6, 0.7, 0.8\}$, running four random seeds per configuration.
We measure accuracy on Val, OOD Train, and OOD Val, and compare IVON and Laplace to
assess whether the choice of optimizer affects the generalization landscape.
\cref{fig:pretrain-final-acc} shows the results.
Both IVON and Laplace achieve high OOD Train accuracy at $T = 256$ and $T = 512$.
For OOD Val, Laplace exhibits stronger and more consistent generalization across task counts and data fractions, whereas IVON reaches competitive OOD Val accuracy only at the highest training data fractions.
OOD Val generalization emerges only at the highest task counts for both methods and requires careful tuning of training data fraction.

\paragraph{Epistemic uncertainty tracks grokking.}
To test whether epistemic uncertainty tracks the grokking transition, we
use IVON and Laplace to decompose predictive uncertainty into epistemic and aleatoric
components across the same hyperparameter grid.
\cref{fig:pretrain-uncertainty-decomp} shows the results in each evaluation set.
\begin{figure}[t]
  \centering
  \textbf{IVON}\par\medskip
  \includegraphics[width=0.99\textwidth]{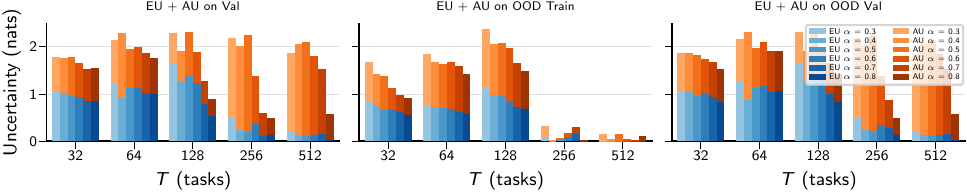}
  \textbf{Laplace}\par\medskip
  \includegraphics[width=0.99\textwidth]{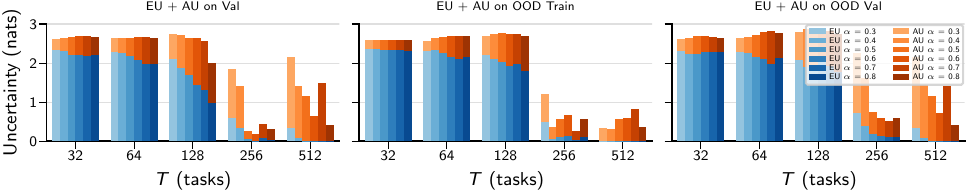}

  \caption{
    \textbf{Uncertainty decomposition with \gls{IVON} and Laplace}.
    Decomposition of predictive uncertainty into aleatoric and epistemic
    components across the pretraining hyperparameter grid.
  }
  \label{fig:pretrain-uncertainty-decomp}
\end{figure}
Both methods show that at low task diversity, total uncertainty is dominated
by the epistemic component across all splits. As $T$ increases, epistemic
uncertainty drops sharply at $T = 256$ and $T = 512$ for both IVON and
Laplace, while aleatoric uncertainty remains high. This reduction closely
tracks the generalization transition in the accuracy heatmaps, providing a
principled uncertainty-based signature of grokking consistent across approximate
inference methods.
This is an important finding: as of this writing, \emph{grokking} has been primarily characterized in terms of accuracy dynamics (which implies the need for labels to detect it), while here we show that it also has a clear signature in the model's posterior uncertainty, which can be measured without access to labels and provides a more direct window into the model's learning dynamics.

\paragraph{Effect of context length on generalization.}
To study how the number of in-context examples affects the model's ability
to infer the underlying task, we take the models trained with $n_\mathrm{ctx}
= 32$ and evaluate them by varying the number of in-context examples from
$1$ to $32$ at inference time, without any additional training. The task
vector $(a, b)$ is never provided to the model, which must infer it solely
from the input-output triples in the context. We compare IVON and Laplace
on Val, OOD Train, and OOD Val. \cref{fig:context-sweep-eu-au} shows
accuracy, epistemic uncertainty, and aleatoric uncertainty as a function
of context size.
\begin{figure}[t]
  \centering
  \includegraphics[width=0.99\textwidth]{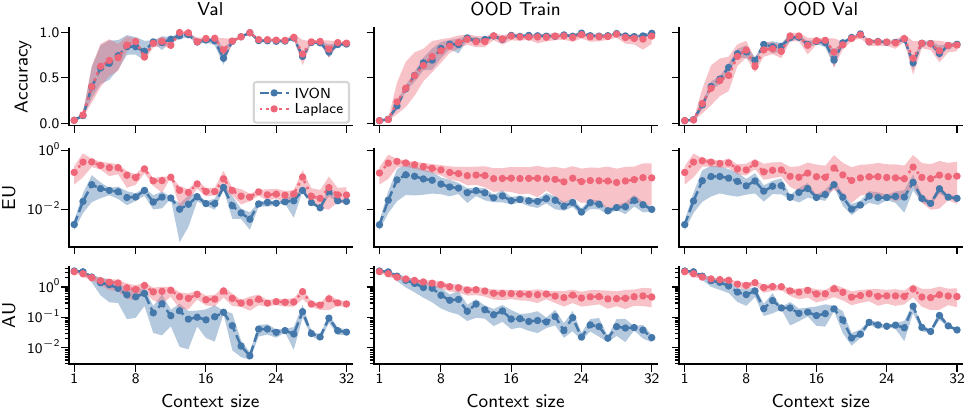}
  \caption{
    \textbf{Accuracy, epistemic and aleatoric uncertainty across context sizes}.
    Increasing the number of in-context examples monotonically improves accuracy and decreases alepatoric uncertainty, while the epistemic uncertainty follows a non-monotonic behavior.
  }
  \label{fig:context-sweep-eu-au}
\end{figure}
Accuracy improves monotonically with context size for both IVON and Laplace across all splits, confirming that more examples indeed help the model infer the task (and thus generalize better on OOD splits).
As expected, the aleatoric uncertainty decreases with more context and converges to a floor, reflecting the reduced ambiguity in the task given more evidence.
The behavior of epistemic uncertainty is more complex: it increases at small context sizes, likely due to the model's initial uncertainty about the task, then decreases sharply as more examples are added.
Notably, this behavior is present in both IVON and Laplace, albeit with different magnitudes, suggesting that it isn't an artifact of the approximate inference method but rather a property of the model's (meta-)learning dynamics as it processes in-context evidence.
Also, note that this evidence does not contradict the known fact that uncertainty scales monotonically with data in Bayesian inference \citep[see, e.g.,][]{rosso2025scaling}, but it rather highlights that the current scaling laws of uncertainty should be refined to account for the effect of test-time adaptation via in-context learning.

\begin{figure}[t]
  \centering
  \includegraphics[width=0.95\textwidth]{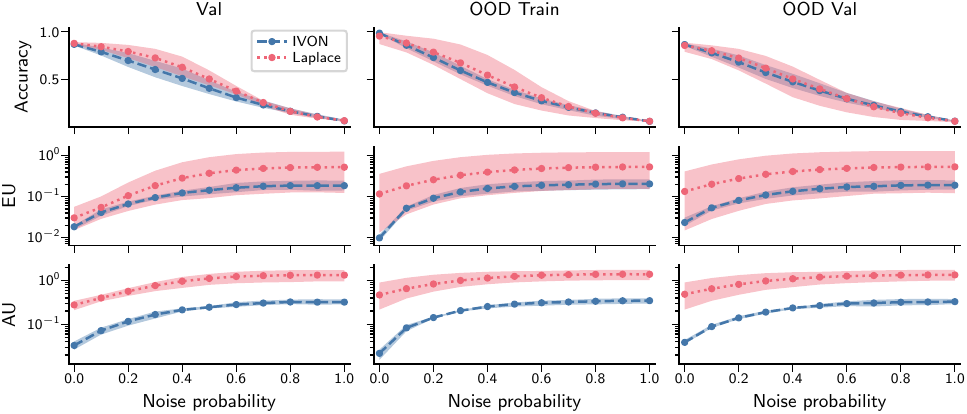}
  \caption{
    \textbf{Accuracy vs. epistemic and aleatoric uncertainty across context
    noise levels}. Corrupting in-context examples with random label noise degrades accuracy and raises both aleatoric and epistemic uncertainty.
  }
  \label{fig:context-noise-sweep-eu-au}
\end{figure}
\paragraph{Robustness to context noise.}
As a final test of the model's uncertainty estimates, we evaluate how they respond to noise in the in-context examples.
For this experiment, we corrupt the context by replacing each label $z_i$ in the context triples with a random token sampled uniformly from $\{0, \ldots, p-1\}$ with probability $p_\mathrm{noise} \in [0, 1]$.
The input tokens $x$ and $y$ are never corrupted, and the final query triple is always clean.
As we did before, we evaluate both IVON and Laplace on Val, OOD Train, and OOD Val, measuring accuracy, aleatoric uncertainty, and epistemic uncertainty as a function of the noise level (\cref{fig:context-noise-sweep-eu-au}).
Accuracy degrades smoothly as noise increases for both methods and all
splits. Aleatoric uncertainty rises proportionally to the noise level,
correctly capturing the increased label ambiguity in the context. Epistemic
uncertainty also increases with noise, but more markedly for Laplace than
for IVON, suggesting that the variational posterior learned by IVON is more
robust to corrupted context.
This is slightly surprising, as one might expect that corrupted context would primarily increase epistemic uncertainty, since it makes the task harder to infer, while aleatoric uncertainty should remain stable as it reflects irreducible noise in the data rather than lack of context.
This demonstrates that recent analysis on disentangling aleatoric and epistemic uncertainty \citep[e.g.,][]{mucs_anyi2024benchmarking,kirchhof2025position} do not necessarily apply to the in-context learning setting, and that further work is needed to understand how these uncertainty components interact when the model is performing test-time adaptation.

\section{Why Epistemic Uncertainty Can Also Define Grokking Time}
\label{sec:theoretical-analysis}

Our empirical results suggest that grokking is visible not only in accuracy curves but also in epistemic uncertainty. Making this claim rigorous for the full transformer on modular arithmetic is currently out of reach, so in this section we drastically simplify the problem.
Following the linear-regression setup of \citet{levi2024grokking}, we replace the sequence model by Bayesian linear regression.
Although this is a very simple model, \citet{levi2024grokking} show that it already captures the key phenomenon of delayed generalization.
Here we show that it also captures the key phenomenon of delayed EU peaks, when the model is trained with Gaussian variational inference.
This abstraction is clearly much simpler than the model studied in the rest of the paper, but it gives fully tractable dynamics and still matches the qualitative empirical picture: training error decays before generalization
error, the epistemic uncertainty is non-monotone, and the generalization-side transition is delayed.
While full derivations are deferred to \cref{sec:simplified-model}, we sketch the main results here.

\paragraph{Simplified problem.}
Let $\mbX \in \bbR^{n \times d}$ be the design matrix, with $\mbx_i\sim \cN(\mbzero, \mbI_d)$, and $\mby \in \bbR^n$ be the target vector, generated from $\mbw_\star$ with isotropic Gaussian prior.
Consider the linear-Gaussian model
\begin{align}
  p(\mby \mid \mbw)
  & = \cN(\mby; \mbX \mbw, \sigma^2 \mbI_n),
  &
  p(\mbw)
  & = \cN(\mbw; \mbzero, \tau^2 \mbI_d).
\end{align}
We study the regime $n, d \to \infty$ with fixed aspect ratio
$\lambda = d / n < 1$.
Our goal is to analyze the dynamics of Gaussian variational inference in this model, and to understand how the risk and uncertainty evolve along the variational trajectory.
Following the results of \citet{lambert2022variational}, since the model is linear and the variational family is Gaussian,
the gradient flows on the Bures--Wasserstein space of Gaussian measures $q_t(\mbw) = \cN(\mbw; \mbmu_t, \mbSigma_t)$ can be written in closed form as a system of ODEs:
\begin{align}
  \dot \mbmu_t
  & = \mbb - \mbA \mbmu_t,
  \\
  \dot \mbSigma_t
  & = 2 \mbI_d - \mbA \mbSigma_t - \mbSigma_t \mbA.
\end{align}
with $\mbA = \sigma^{-2}\mbX^\top \mbX + \tau^{-2}\mbI_d$ and $\mbb = \sigma^{-2}\mbX^\top \mby$.
This system has exact solutions, as shown in the Appendix.
Along the Gaussian variational trajectory, we are interested in the expected
risks
\begin{align}
  \mathcal{L}_{\text{emp}}(t) = \E_{q_t}\frac{1}{n}\sum_{i=1}^n \ell(y_i, \mbx_i^\top \mbw)
  \qquad \text{and} \qquad
  \mathcal{L}_{\text{gen}}(t) = \E_{q_t(\mbw)p(\mbx, y)}\ell(y_i, \mbx_i^\top \mbw) %
\end{align}
where $\ell$ is the squared loss, i.e. $\ell(y, \hat y) = \frac{1}{2}(y - \hat y)^2$.
We can also define the excess risks, which are the differences between the risks at time $t$ and their limiting values as $t \to \infty$.
\paragraph{Risk dynamics and spectral decomposition.}
In the well-specified model regime, the fixed point $\mbmu_\infty$ coincides
with the exact parameters $\mbw_\star$ and the population covariance is isotropic, so the relevant
objects are the mean gap $\delta\mbmu_0 \equiv \mbmu_0 - \mbmu_\infty$ and the
covariance gap $\Delta\mbSigma_0 \equiv \mbSigma_0 - \mbSigma_\infty$. Writing
$\mbE_t \equiv e^{-\mbA t}$, the excess empirical and generalization risks are
\begin{align}
  \widetilde{\mathcal{L}}_{\mathrm{emp}}(t)
  & =
  \frac{1}{2n}
  \delta\mbmu_0^\top \mbE_t \mbX^\top \mbX \mbE_t \delta\mbmu_0
  + \frac{1}{2n}\Tr(\mbX^\top \mbX \mbE_t \Delta\mbSigma_0 \mbE_t),
  \\
  \widetilde{\mathcal{L}}_{\mathrm{gen}}(t)
  & =
  \frac{1}{2}\delta\mbmu_0^\top \mbE_t^2 \delta\mbmu_0
  + \frac{1}{2}\Tr(\mbE_t \Delta\mbSigma_0 \mbE_t).
\end{align}
The first term tracks the residual bias of the variational mean, while the second tracks the residual uncertainty from the covariance.
Crucially, both risks are driven by the same operator $\mbA$, but they are read out in two different quantities: the empirical risk uses the training covariance $\widehat{\mbSigma} = \mbX^\top \mbX / n$, whereas the population risk uses the identity.
In the high-dimensional limit, the eigenvalues of the sample covariance $\widehat{\mbSigma}$ converge in distribution to the Marchenko-Pastur law $\mathrm{MP}(\lambda)$, and the excess risks can be expressed as expectations over this distribution:
\begin{align}
  \label{eq:vi_emp_loss_anal}
  \widetilde{\mathcal{L}}_{\text{emp}}(t)
  & \approx
  C_0\,
  \E_{\nu \sim \mathrm{MP}(\lambda)}
  \left[
    \nu \exp\left(
      -2 \left(
        \frac{n}{\sigma^2}\nu + \frac{1}{\tau^2}
      \right)t
    \right)
  \right],
  \\
  \label{eq:vi_gen_loss_anal}
  \widetilde{\mathcal{L}}_{\text{gen}}(t)
  & \approx
  C_0\,
  \E_{\nu \sim \mathrm{MP}(\lambda)}
  \left[
    \exp\left(
      -2 \left(
        \frac{n}{\sigma^2}\nu + \frac{1}{\tau^2}
      \right)t
    \right)
  \right].
\end{align}
where $C_0$ collects the initial mean and covariance gaps (details in the Appendix).
Under the assumption of high-dimensional self-averaging regime, the late-time behavior is controlled by the lower Marchenko-Pastur edge $\nu_- = (1 - \sqrt{\lambda})^2$:
\begin{align}
  \widetilde{\mathcal{L}}_{\text{emp,gen}}(t)
  \sim
  K_{\text{emp,gen}} t^{-3/2} e^{-\kappa t},
  \qquad
  \kappa
  =
  2\left(
    \frac{n}{\sigma^2}(1 - \sqrt{\lambda})^2 + \frac{1}{\tau^2}
  \right),
\end{align}
with ratio $K_{\text{gen}} / K_{\text{emp}} = (1 - \sqrt{\lambda})^{-2}$.
The common exponential rate $\kappa$ reflects the shared edge mode, while the
different factors is due to the extra $\nu$ in the empirical risk and it is responsible for the train--test delay.
This asymptotic mismatch is the only ingredient needed for the uncertainty result below.

\begin{figure}[t]
  \centering
  \includegraphics[width=\textwidth]{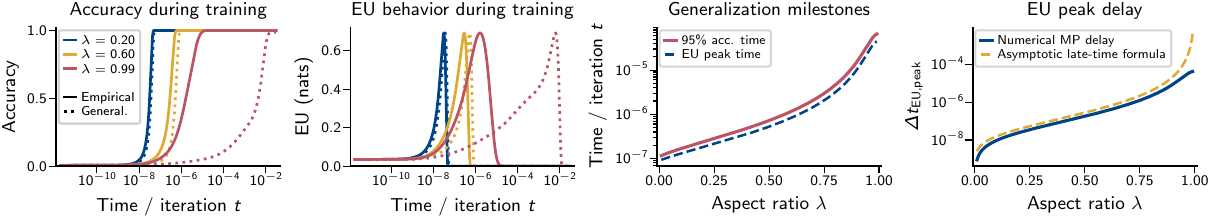}
  \caption{
    \textbf{Training and generalization dynamics of the simplified model.}
    Increasing the aspect ratio $\lambda$ delays generalization relative to training (\textit{left}).
    The epistemic uncertainty peaks coincide with the transition from interpolation to generalization (\textit{center left}), although the EU peak occurs before the accuracy threshold is exceeded (\textit{center right}).
    Consistently, the EU peak delay grows rapidly with $\lambda\to 1$, with the asymptotic formula of \cref{prop:threshold-eu-peak-delay} providing a good proxy for the observed delay (\textit{right}).
  }
  \label{fig:theory-summary}
\end{figure}

\paragraph{Threshold-model uncertainty and delayed EU peaks.}
Following the construction used by \citet{levi2024grokking}, we turn regression into a binary success event by choosing a tolerance $\epsilon > 0$ and declaring a prediction correct whenever the expected squared residual is smaller than $\epsilon$.
For a datapoint $(\mbx, y)$, define the thresholded success variable
$
Z_\epsilon(\mbx, y; \mbw)
=
\mathbbm{1}\{(y - \mbx^\top \mbw)^2 \le \epsilon\}
$.
We can define the empirical and generalization accuracies $\mathcal A_{\text{emp}}$ and $\mathcal A_{\text{gen}}$
as the expected value of $Z_\epsilon$ under the variational distribution and the data distribution (see the Appendix for details).
By analogy with the accuracy threshold defined in \citet{levi2024grokking}, we can define the grokking time as the time at which the accuracy curve crosses a certain threshold, which is equivalent as the time at which the excess risk crosses a certain threshold; for $95\%$ accuracy we have $\widetilde{\mathcal{L}}(t_\mathrm{grok}) \approx \epsilon / 8$.
Now our current goal is to analyze the epistemic uncertainty of this model.

\begin{lemma}[Epistemic uncertainty of the classification-equivalent model]
  \label{prop:threshold-uncertainty}
  For fixed $(\mbx, y)$, the residual under $q_t$ is Gaussian and the thresholded success event is a Bernoulli random variable.
  Hence, its epistemic uncertainty is exactly the binary entropy of its success probability, which in the $n, d \to \infty$ limit can be written in terms of the excess risk $\widetilde{\mathcal{L}}(t)$:
  \begin{align}
    \mathrm{EU}(t)
    \approx
    h_2\!\left(
      \operatorname{erf}\left(
        \sqrt{\frac{\epsilon}{4 \widetilde{\mathcal{L}}(t)}}
      \right)
    \right),
  \end{align}
  where $h_2(p) = -p \log p - (1-p)\log(1-p)$ is the binary entropy.
  It is a non-monotone function of the excess risk, and its peak occurs when
  \begin{align}
    \widetilde{\mathcal{L}}(t_{\mathrm{peak}})
    =
    \frac{\epsilon}{4(\operatorname{erf}^{-1}(1/2))^2}
    \approx 1.10\,\epsilon,
  \end{align}
  i.e., when the thresholded success event is maximally ambiguous. By reversing this relation, we can also derive the $t_\mathrm{peak}$ as a function of $\epsilon$ and the risk dynamics.
\end{lemma}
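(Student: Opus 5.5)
We plan to proceed in three steps: compute the law of the residual under $q_t$, reduce the success probability to an $\operatorname{erf}$ of the excess risk, and then locate the maximum of the resulting binary entropy.

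\textbf{Law of the residual.} Fix $(\mbx, y)$ and let $\mbw \sim q_t = \cN(\mbmu_t, \mbSigma_t)$. The residual $r = y - \mbx^\top \mbw$ is then Gaussian, $r \sim \cN(m_t, s_t^2)$, with
\[
m_t = y - \mbx^\top\mbmu_t, \qquad s_t^2 = \mbx^\top \mbSigma_t \mbx.
\]
Consequently $Z_\epsilon(\mbx, y; \mbw) = \mathbbm{1}\{r^2 \le \epsilon\}$ is Bernoulli with success probability
\[
p_t = \Pr(|r| \le \sqrt{\epsilon}).
\]
We then apply the decomposition of the Preliminaries to this binary predictive variable. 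The per-sample predictions are deterministic indicators, so their entropies vanish and the aleatoric term is zero. The epistemic term therefore equals the entropy of the averaged Bernoulli, that is, $\mathrm{EU} = h_2(p_t)$ exactly. This settles the first claim of the statement.

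\textbf{Reduction to the excess risk.} We would work in the high-dimensional regime, in which the mean gap and the covariance gap both contribute to the residual spread. To second order we treat $r$ as centred, $m_t \approx 0$ after averaging, with an effective variance $v_t$ that collects the bias term $(\mbx^\top \delta\mbmu_t)^2$ and the covariance term $\mbx^\top \Delta\mbSigma_t \mbx$. Self-averaging over $\mbx \sim \cN(\mbzero, \mbI_d)$ concentrates these quadratic forms on their traces. Comparing with the formulas for $\widetilde{\mathcal L}$ identifies the effective variance with the excess risk, $v_t = 2\widetilde{\mathcal L}(t)$, since $\ell$ carries the factor $1/2$. For a centred Gaussian,
\[
p_t = \operatorname{erf}\!\bigl(\sqrt{\epsilon/(2 v_t)}\bigr) = \operatorname{erf}\!\bigl(\sqrt{\epsilon/(4\widetilde{\mathcal L}(t))}\bigr),
\]
which gives the displayed formula for $\mathrm{EU}(t)$.

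\textbf{Main obstacle.} I expect the hardest part to be justifying this identification rigorously, namely that the residual is effectively centred and that its variance is exactly $2\widetilde{\mathcal L}$ rather than the total risk. The difficulty is the noise floor $\sigma^2$. In the well-specified limit it is absorbed into the $t\to\infty$ reference, and the statement is asymptotic (``$\approx$''). My plan is to take the limiting residual spread as a baseline that is subtracted off, to invoke concentration of $\mbx^\top \mbM \mbx/d$ around $\Tr(\mbM)/d$ for the matrices $\mbE_t\,\cdot\,\mbE_t$, and to use the Marchenko--Pastur representations \eqref{eq:vi_emp_loss_anal} and \eqref{eq:vi_gen_loss_anal} already derived.

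\textbf{Non-monotonicity and the peak.} The map $\widetilde{\mathcal L} \mapsto p = \operatorname{erf}(\sqrt{\epsilon/(4\widetilde{\mathcal L})})$ is strictly decreasing from $1$ (as $\widetilde{\mathcal L}\to 0$) to $0$ (as $\widetilde{\mathcal L}\to\infty$). Meanwhile $h_2$ is strictly concave with its unique maximum at $p = 1/2$. Their composition is therefore unimodal, hence non-monotone in $\widetilde{\mathcal L}$, and it peaks where
\[
\sqrt{\epsilon/(4\widetilde{\mathcal L})} = \operatorname{erf}^{-1}(1/2).
\]
Solving gives $\widetilde{\mathcal L}(t_{\mathrm{peak}}) = \epsilon / \bigl(4(\operatorname{erf}^{-1}(1/2))^2\bigr)$. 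Numerically $\operatorname{erf}^{-1}(1/2) \approx 0.4769$, so this is $\approx 1.10\,\epsilon$.

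Finally, $\widetilde{\mathcal L}(t)$ is strictly decreasing in $t$: it is a positive mixture of decaying exponentials. The relation can therefore be inverted to express $t_{\mathrm{peak}}$ in terms of $\epsilon$. Using the edge asymptotics $K t^{-3/2}e^{-\kappa t}$, this yields $t_{\mathrm{peak}}$ explicitly to leading order.
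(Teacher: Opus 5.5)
Your proposal follows the paper's argument step for step. The three steps are: the Bernoulli observation with a deterministic conditional, so that $\mathrm{EU}=h_2(p_t)$ exactly; the single effective centred-Gaussian closure $\mathbb{E}[r_t^2]\approx 2\widetilde{\mathcal{L}}(t)$, which yields the $\operatorname{erf}$ formula; and unimodality of $h_2$ composed with a strictly decreasing map, which places the peak at $p_t=1/2$. As in the paper, the middle step is an approximation rather than a derivation: the floor $\sigma_\varepsilon^2+\mbx^\top\mbSigma_\infty\mbx$ (or its empirical analogue) cannot literally be subtracted inside a Gaussian interval probability, so the identification holds only when that floor is negligible, and you are right to flag it as the weak point.
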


\begin{remark}
  Note that the peak of the epistemic uncertainty does not occur at the same time as the accuracy threshold, but rather before it, when the excess risk is still around $1.10\,\epsilon$ (instead of $\epsilon/8$); this offset is illustrated in the center-right panel of \cref{fig:theory-summary}.
\end{remark}

Finally, we can analyze the delay between the EU peaks on train and test, which is a proxy for the grokking time defined through epistemic uncertainty.

\begin{theorem}[Delay between threshold-uncertainty peaks]
  \label{prop:threshold-eu-peak-delay}
  Assume the construction of \cref{prop:threshold-uncertainty} and the asymptotics described above.
  If the empirical and generalization uncertainty curves peak in this regime, then the
  delay between the two EU peaks satisfies
  \begin{align}
    \Delta t_{\mathrm{EU,peak}}(\lambda)
    =
    t_{\mathrm{peak},\mathrm{gen}} - t_{\mathrm{peak},\mathrm{emp}}
    \simeq
    \frac{
      \log\left(\frac{1}{1 - \sqrt{\lambda}}\right)
    }{
      \frac{n}{\sigma^2}(1 - \sqrt{\lambda})^2 + \frac{1}{\tau^2}
    }.
  \end{align}
\end{theorem}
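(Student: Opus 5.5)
The plan is to combine the peak condition of \cref{prop:threshold-uncertainty} with the late-time asymptotics of the excess risks. By \cref{prop:threshold-uncertainty}, each EU curve (empirical or generalization) is a function of its excess risk alone. Its peak time is therefore characterized by the same level condition for both curves:
\begin{align}
  \widetilde{\mathcal{L}}_{\mathrm{emp}}(t_{\mathrm{peak},\mathrm{emp}})
  =
  \widetilde{\mathcal{L}}_{\mathrm{gen}}(t_{\mathrm{peak},\mathrm{gen}})
  =
  c\,\epsilon,
  \qquad
  c = \frac{1}{4(\operatorname{erf}^{-1}(1/2))^2}.
\end{align}
The key point is that the target level $c\,\epsilon$ is identical for both curves. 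The delay is then entirely determined by how long it takes each excess-risk curve to reach that common level.

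Next, I substitute the edge asymptotics $\widetilde{\mathcal{L}}_{\text{emp,gen}}(t) \sim K_{\text{emp,gen}}\, t^{-3/2} e^{-\kappa t}$ and take logarithms. Writing $\Phi(t) = \kappa t + \tfrac32 \log t$, the two peak times solve
\begin{align}
  \Phi(t_{\mathrm{peak},\mathrm{emp}}) = \log\frac{K_{\mathrm{emp}}}{c\,\epsilon},
  \qquad
  \Phi(t_{\mathrm{peak},\mathrm{gen}}) = \log\frac{K_{\mathrm{gen}}}{c\,\epsilon}.
\end{align}
Subtracting gives
\begin{align}
  \kappa\,\Delta t + \tfrac32\log\frac{t_{\mathrm{peak},\mathrm{gen}}}{t_{\mathrm{peak},\mathrm{emp}}}
  = \log\frac{K_{\mathrm{gen}}}{K_{\mathrm{emp}}}
  = 2\log\frac{1}{1-\sqrt{\lambda}},
\end{align}
using the stated ratio $K_{\text{gen}}/K_{\text{emp}} = (1-\sqrt\lambda)^{-2}$. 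Dividing by $\kappa = 2\bigl(\frac{n}{\sigma^2}(1-\sqrt\lambda)^2 + \frac{1}{\tau^2}\bigr)$, the factor $2$ cancels and yields exactly the claimed leading term. The hypothesis that both curves ``peak in this regime'' is what licenses using the asymptotic form at both peak times. I would also note that $\widetilde{\mathcal{L}}_{\mathrm{gen}} \ge \widetilde{\mathcal{L}}_{\mathrm{emp}}$ asymptotically, since $\nu$ is bounded near the edge. Together with the eventual monotonic decrease of $\Phi$'s inverse, this gives $\Delta t > 0$ and a unique crossing of the level $c\,\epsilon$ on each curve.

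The main obstacle is controlling the polynomial correction $\tfrac32 \log(t_{\mathrm{gen}}/t_{\mathrm{emp}})$, which must be shown to be of lower order so that the relation is a genuine $\simeq$. My approach is to write $t_{\mathrm{gen}} = t_{\mathrm{emp}} + \Delta t$, so that the correction equals $\tfrac32\log(1+\Delta t/t_{\mathrm{emp}})$, which is at most $\tfrac32 \Delta t/t_{\mathrm{emp}}$. Moving this term to the left-hand side gives
\begin{align}
  \Delta t\,\bigl(\kappa + O(1/t_{\mathrm{emp}})\bigr) = \log(K_{\mathrm{gen}}/K_{\mathrm{emp}}).
\end{align}
The correction is therefore negligible whenever $\kappa\, t_{\mathrm{emp}} \gg 1$, that is, whenever the peak lies deep in the late-time regime where the edge asymptotics already hold. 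This is precisely the regime of small $\epsilon$ relative to $K$, since $\Phi(t_{\mathrm{emp}}) = \log(K_{\mathrm{emp}}/c\epsilon)$ is then large. A second subtlety is that $K_{\text{emp,gen}}$ and $\kappa$ scale with $n$ in the proportional limit. I would handle this by working in the rescaled time in which $\kappa$ is $O(1)$, so that the assumption $\kappa t_{\mathrm{emp}} \to \infty$ is stated uniformly. Under these conditions the relative error in $\Delta t$ is $O\bigl(1/(\kappa t_{\mathrm{emp}})\bigr)$, which establishes the theorem.
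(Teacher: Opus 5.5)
Your proposal is correct and follows the paper's proof essentially step for step: the common peak level $\widetilde{\mathcal{L}}_\alpha(t_{\mathrm{peak},\alpha}) = \epsilon/(4(\operatorname{erf}^{-1}(1/2))^2)$, substitution of $K_\alpha t^{-3/2}e^{-\kappa t}$, taking logarithms and subtracting, and dividing $\log(K_{\mathrm{gen}}/K_{\mathrm{emp}}) = -2\log(1-\sqrt{\lambda})$ by $\kappa$. Your bound $\tfrac32\log(1+\Delta t/t_{\mathrm{peak},\mathrm{emp}}) \le \tfrac32\,\Delta t/t_{\mathrm{peak},\mathrm{emp}}$ makes explicit what the paper only asserts, namely that the log-ratio of peak times is subleading; however, you should carry the $o(1)$ errors of the $\sim$ asymptotics through the logged equations (as the paper does), so the explicit rate $O(1/(\kappa t_{\mathrm{emp}}))$ you quote is stronger than your argument actually delivers.
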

This scaling is also visible in \cref{fig:theory-summary}, where the separation between the empirical and generalization EU peaks grows markedly as $\lambda \to 1$, and it follows the asymptotic prediction.

\begin{wrapfigure}{r}{0.4\textwidth}
  \vspace{-0.5cm}
  \centering
  \includegraphics[width=0.4\textwidth]{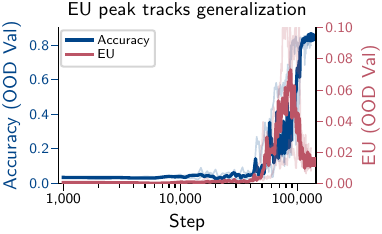}
  \vspace{-0.5cm}
  \caption{
    \textbf{Empirical trajectory of accuracy and epistemic uncertainty.}
    Even in the full transformer model, the EU peak occurs before the accuracy threshold is exceeded, similarly to the asymptotic behavior of the simplified model.
  }
  \label{fig:empirical-trajectory}
\end{wrapfigure}
\paragraph{Takeaway.}
\Cref{prop:threshold-eu-peak-delay} is the main theoretical reason to say that
grokking time can also be defined through epistemic uncertainty.
The point is not that the EU peak occurs at the same absolute time as an accuracy threshold (indeed, the EU peak is happing earlier than the accuracy threshold), but rather that both observables inherit the same train-generalization delay from the same slow edge-of-spectrum modes.
In this simplified model, accuracy and epistemic uncertainty are therefore two different probes of the same underlying mechanism.
This is exactly the qualitative behavior we observe in the empirical analysis.
Indeed, even in the full transformer model on the modulo arithmetic dataset, the EU peak occurs before the accuracy threshold is exceeded, showing that the theoretical picture of the simplified model is compatible with the empirical behavior (\cref{fig:empirical-trajectory}).

\section{Concluding Remarks}
\label{sec:conclusions}

We presented an uncertainty quantification perspective on delayed generalization in in-context learning and showed that epistemic uncertainty provides a sharp, label-free signature of grokking.
In a controlled modular linear arithmetic setting with explicit in-distribution and out-of-distribution splits, we found that grokking coincides with a sudden collapse in epistemic uncertainty, and that this behavior remains informative under changes in task diversity, context length, and context noise.
For both \gls{IVON} and Last-Layer Laplace, this uncertainty signal is robust, while still revealing meaningful differences in out-of-distribution behavior.
We further supported these findings with theorethical results on a simplified Bayesian linear model, showing that delayed generalization and uncertainty peaks are driven by the same underlying spectral mechanism.
To conclude, we want to emphasize that while our findings provide a novel perspective on the grokking phenomenon and its relationship to uncertainty dynamics, they are based on a specific synthetic setting and simplified theoretical model.
Extending these insights to larger models and other language tasks remains an important direction for future work, and the interpretation of uncertainty dynamics in more complex settings should be approached with caution.

\section*{Acknowledgments}
This project was provided with AI computing and storage resources by GENCI at IDRIS thanks to the grant 2025-AD011016811 on the supercomputer Jean Zay's A100/H100 partition.
SR SR acknowledges the support of CIRCALIS AI-HPC facility at EURECOM, with partial funding from French Region Sud.

\bibliographystyle{colm2026}
\bibliography{references}

\clearpage
\appendix

\section{Extended Related Works}
\label{sec:related-works-long}

Grokking and \gls{ICL} are training-inference phenomena that expose a common theme in deep learning: generalization can emerge abruptly, late, and in ways not well predicted by early optimization signals.
Grokking, originally characterized on small algorithmic datasets (notably modular arithmetic), is a delayed generalization regime where test performance remains poor long after training performance saturates, and then transitions sharply to near-perfect generalization \citep{power2022grokking}.
\gls{ICL}, popularized by large transformer language models (e.g., GPT-3), is the ability to adapt to a task at inference time by conditioning on a prompt containing input-output examples, without gradient-based parameter updates \citep{brown2020language}.
Explanations of \gls{ICL} increasingly characterize transformers as implementing recognizable statistical estimators or learning procedures in their forward pass, including ridge regression, gradient descent, and in certain regimes Bayesian estimators \citep{akyurek2023what,von_oswald2023transformers, bai2023transformers,xie2025initialization}.
A Bayesian framing provides a natural language for connecting these threads: pretraining can be interpreted as learning an implicit prior (or an implicit family of priors) over tasks/concepts, while ICL corresponds to approximate Bayesian inference (or Bayesian model averaging) over latent task variables given the prompt \citep{xie2022explanation, zhang2025what}.
In parallel, \gls{BDL} treats uncertainty as first-class, via priors and posteriors over parameters and predictive distributions, and offers tools that make uncertainty trajectories during training and inference measurable \citep{graves2011practical,blundell2015weight,gal2016dropout,lakshminarayanan2017simple, maddox2019simple,ritter2018scalable}.

\paragraph{In-context learning.}
While the theoretical understanding of \gls{ICL} is still developing, a growing body of work has empirically studied various aspects of ICL, including calibration and uncertainty measures under prompting.
\citet{zhang2024study} systematically study calibration under \gls{ICL} across NLP tasks and report that increasing the number of \gls{ICL} examples can initially worsen miscalibration before later improving calibration, and that miscalibration is prominent in low-shot settings.
\citet{ling2024uncertainty} propose a formulation that decomposes predictive uncertainty in \gls{ICL} into epistemic and aleatoric components and introduce estimation methods aimed at plug-and-play uncertainty decomposition for \gls{ICL}.
\citet{wang2025uncertainty} study long-context \gls{ICL} and report that adding more in-context examples can reduce total uncertainty and epistemic uncertainty, while also analyzing confidence evolution across layers.

\paragraph{Grokking.}
Empirical studies of grokking have primarily characterized loss and accuracy dynamics, changes in representations, and mechanistic interpretations of the phenomenon.
\citet{power2022grokking} established the canonical grokking setting on small modular arithmetic tasks, and observed that generalization emerges suddenly after a long period of overfitting.
\citet{liu2022towards} provided an extensive phase-diagram evidence and identified multiple learning phases, emphasizing structured representation emergence as a driver of generalization.
\citet{mohamadi2024why} focused specifically on modular addition and propose a theoretical explanation based on the \gls{NTK} regime, showing that the transition to generalization corresponding to leaving the kernel regime happens only after initial overfitting.
A subset of grokking-adjacent work introduces information-theoretic progress measures, which is conceptually close to uncertainty quantification but not identical to predictive uncertainty.
\citet{clauw2024information} analyze higher-order mutual information (synergy/redundancy) among neurons through training and report distinct phases that can anticipate grokking, framing grokking as an emergent phase transition caused by synergistic interactions.
\citet{golechha2024progress} investigates grokking on real-world classification tasks and proposes progress measures such as activation sparsity, absolute weight entropy, and approximate local circuit complexity that correlate with grokking more strongly than weight norms in that study.

\paragraph{Difference from prior work.}
Closer to our work is the work of \citet{he2024learning}, which extended the classic grokking setting to a larger class of modular linear functions, allowing for a more detailed analysis of the learning dynamics not only in-distribution but also out-of-distribution.
While we build on the same experimental set up, our focus is on the Bayesian learning dynamics and uncertainty trajectories during grokking, rather than characterizing the phases of learning or the representational changes.
Notably, we identify a the condition under which the model can generalize in-domain and out-of-domain, and we analyze how the model's uncertainty evolves as it transitions from overfitting to generalization, providing a novel perspective on the grokking phenomenon.

\section{Description of the methods used}
\label{sec:methods}
\subsection{Improved Variational Online Newton (IVON)}
IVON \citep{shen2024variational} is a scalable, second-order optimizer that makes variational inference tractable for large-scale deep neural networks. Instead of minimizing the empirical risk, IVON directly optimizes the Evidence Lower Bound (ELBO):

\begin{equation}
    \mathcal{L}(q) = \lambda\mathbb{E}_{q(\mbw)}[\bar{\ell}(\mbw)] + \mathbb{D}_{KL}(q(\mbw) || p(\mbw))
\end{equation}

where, $q(\mbw) = \cN(\mbw; \mbmu, \operatorname{diag}(\mbSigma)^2)$ is the diagonal Gaussian posterior approximation, $p(\mbw)$ is the prior, $\bar{\ell}(\mbw)$ is the empirical loss, and $\lambda$ scales the objective.

Previous methods like VON \citep{khan2018fast} relied on a computationally expensive per-example Gauss-Newton Hessian estimate. IVON circumvents this by using the \textit{reparameterization trick} to estimate the Hessian $\hat{\mbh}$ using only the standard minibatch gradient:

\begin{equation}
    \hat{\mbh} \leftarrow \hat{\nabla} \bar{\ell}(\mbw) \odot \frac{\mbw - \mbmu}{\mbSigma^2}
\end{equation}

where $\odot$ is the element-wise product and $\mbw \sim q(\mbw)$. By requiring only a single vector multiplication, IVON's computational cost matches Adam's.

To ensure the Hessian remains positive-definite, IVON applies a Riemannian gradient descent update:

\begin{equation}
    \mbh \leftarrow \beta_2 \mbh + (1 - \beta_2) \hat{\mbh} + \frac{1}{2}(1 - \beta_2)^2 \frac{(\mbh - \hat{\mbh})^2}{\mbh + \delta}
\end{equation}

Here, $\beta_2$ is the Hessian momentum, and $\delta > 0$ acts as a damping parameter corresponding to the prior precision (weight decay). 

The mean $\mbmu$ is updated via a preconditioned Newton-like step, and the standard deviation is tied to the Hessian via $\mbSigma = 1 / \sqrt{\lambda(\mbh + \delta)}$. At inference, IVON quantifies uncertainty through Bayesian Model Averaging (BMA) by sampling $\mbw^{(s)} \sim \cN(\mbmu, \operatorname{diag}(\mbSigma)^2)$.

\subsection{Laplace approximation (LA)}
Laplace approximation \citep{daxberger2021laplace} is a \textit{post-hoc} framework that converts a deterministically trained neural network into a Bayesian model for uncertainty quantification, without modifying the training objective.

LA interprets the final weights trained with $L_2$ regularization as a Maximum A Posteriori (MAP) estimate, $\mbw_{\text{MAP}}$, obtained by minimizing the regularized negative log-posterior:

\begin{equation}
    \mbw_{\text{MAP}} = \arg\min_{\mbw} \left( -\sum_{n=1}^N \log p(y_n \mid f_{\mbw}(\mbx_n)) - \log p(\mbw) \right)
\end{equation}

where $p(\mbw) = \cN(\mbw; \mbzero, \tau^2 \mbI_d)$ is the isotropic Gaussian prior induced by weight decay.

LA constructs a local Gaussian approximation of the intractable true posterior centered at the MAP solution:

\begin{equation}
    p(\mbw \mid \mathcal{D}) \approx \cN(\mbw; \mbw_{\text{MAP}}, \mbSigma)
\end{equation}

The covariance matrix $\mbSigma = \mbH^{-1}$ is the inverse of the Hessian $\mbH$, derived via a second-order Taylor expansion evaluated at $\mbw_{\text{MAP}}$. 

Computing and inverting the full exact Hessian $\mbH \in \bbR^{d \times d}$ is computationally infeasible and often yields an indefinite matrix. Practical LA implementations resolve this through three core approximations:

\begin{enumerate}
    \item Fisher Information Matrix (FIM): Replaces the exact Hessian to guarantee positive semi-definiteness.
    \item Last-Layer Laplace (LLLA): Restricts Bayesian inference to the final linear layer while keeping preceding layers fixed at $\mbw_{\text{MAP}}$, drastically reducing dimensionality.
    \item Kronecker-Factored Approximate Curvature (KFAC): Approximates the Fisher matrix as a Kronecker product $F \approx A \otimes B$ (input and pre-activation gradient covariances) for efficient inversion.
\end{enumerate}

The predictive distribution for a new data point $\mbx^*$ is computed by marginalizing over this approximate posterior:

\begin{equation}
    p(y \mid \mbx^*, \mathcal{D}) \approx \int p(y \mid f_{\mbw}(\mbx^*)) \cN(\mbw; \mbw_{\text{MAP}}, \mbSigma) \, \dd\mbw
\end{equation}

In practice, this integral is approximated via Monte Carlo sampling, producing a distribution of outputs to quantify epistemic uncertainty.

\section{A Simplified Model for Grokking under Variational Inference}
\label{sec:simplified-model}

This section rewrites the derivation in a self-contained form for a
linear-Gaussian teacher-student model. The goal is to isolate the mechanism
that produces distinct training and generalization timescales under Gaussian
variational inference, while keeping all quantities analytically tractable.
Note that while the proof style and some of the intermediate results are similar to those in \citet{levi2024grokking}, the final results and the interpretation are different, as we focus on the variational learning dynamics rather than the gradient flow of the population risk.
Furthermore, the analysis in \citet{levi2024grokking} is mostly focused on loss and accuracy dynamics, while we make a specific focus on the uncertainty dynamics and how they relate to the generalization transition, which is a novel perspective on the grokking phenomenon.

\subsection{Bayesian linear regression}

\begin{definition}[Linear-Gaussian model]
  Let $\mbX \in \bbR^{n \times d}$ be the design matrix, let
  $\mbw \in \bbR^d$ be the parameter vector, and let
  $\mby \in \bbR^n$ be the observed responses. We consider the Bayesian linear
  model
  \begin{align}
    p(\mby \mid \mbw)
    & = \cN(\mby; \mbX \mbw, \sigma^2 \mbI_n),
    \\
    p(\mbw)
    & = \cN(\mbw; \mbzero, \tau^2 \mbI_d),
  \end{align}
  with likelihood variance $\sigma^2 > 0$ and prior variance $\tau^2 > 0$.
  When needed, we also assume the true data-generating process is linear-Gaussian:
  \begin{align}
    y = \mbx^\top \mbw_\star + \varepsilon,
    \qquad
    \E[\varepsilon] = 0,
    \qquad
    \E[\varepsilon^2] = \sigma_\varepsilon^2,
  \end{align}
  with population covariance $\mbSigma_x \equiv \E[\mbx \mbx^\top]$.
\end{definition}

\begin{definition}[Posterior precision and linear term]
  For the model above, define
  \begin{align}
    \mbA & \equiv \frac{\mbX^\top \mbX}{\sigma^2} + \frac{\mbI_d}{\tau^2},
    \\
    \mbb & \equiv \frac{\mbX^\top \mby}{\sigma^2}.
  \end{align}
  Since $\tau^2 > 0$, the matrix $\mbA$ is symmetric positive definite.
\end{definition}

The posterior distribution is for this model is the following Gaussian:
\begin{equation}
  \label{eq:posterior}
  p(\mbw \mid \mby) = \cN(\mbw; \mbmu, \mbSigma),
\end{equation}
where
\begin{align}
  \label{eq:mu_solution}
  \mbSigma & = \mbA^{-1},
  \\
  \mbmu    & = \mbSigma \mbb.
\end{align}

\subsection{Gaussian variational inference as a Wasserstein flow}

\begin{definition}[Gaussian variational family and Bures-Wasserstein flow]
  We approximate the posterior by a time-dependent Gaussian
  \begin{align}
    q_t(\mbw) = \cN(\mbw; \mbmu_t, \mbSigma_t),
  \end{align}
  and study the Bures-Wasserstein gradient flow of the variational objective \citep{lambert2022variational}.
  In the Gaussian family, the flow can be written as
  \begin{align}
    \label{eq:ode-wasserstein}
    \dot \mbmu_t
    & =
    -\E_{q_t}
    \left[
      \gradient_{\mbw_t} \log \frac{q_t(\mbw_t)}{p(\mby, \mbw_t)}
    \right],
    \\
    \dot{\mbSigma}_t
    & =
    -\E_{q_t}
    \Biggl[
      \left(
        \gradient_{\mbw_t} \log \frac{q_t(\mbw_t)}{p(\mby, \mbw_t)}
      \right)
      (\mbw_t - \mbmu_t)^\top
      \nonumber
      +
      (\mbw_t - \mbmu_t)
      \left(
        \gradient_{\mbw_t} \log \frac{q_t(\mbw_t)}{p(\mby, \mbw_t)}
      \right)^\top
    \Biggr].
  \end{align}
\end{definition}

\begin{proposition}[Closed ODEs for the variational mean and covariance]
  \label{prop:closed-odes}
  The Gaussian variational parameters obey
  \begin{align}
    \label{eq:mean_dynamics}
    \dot \mbmu_t
    & = \mbb - \mbA \mbmu_t,
    \\
    \label{eq:cov_dynamics}
    \dot \mbSigma_t
    & = 2 \mbI_d - \mbA \mbSigma_t - \mbSigma_t \mbA.
  \end{align}

  \begin{proof}
    We have
    \begin{align}
      \log q_t(\mbw)
      =
      -\frac{1}{2}(\mbw - \mbmu_t)^\top \mbSigma_t^{-1}(\mbw - \mbmu_t)
      + \mathrm{const},
    \end{align}
    so
    \begin{align}
      \gradient_{\mbw} \log q_t(\mbw)
      & = -\mbSigma_t^{-1} (\mbw - \mbmu_t),
      \\
      \gradient_{\mbw} \log p(\mby, \mbw)
      & = \mbb - \mbA \mbw.
    \end{align}
    Subtracting these two gradients yields
    \begin{equation}
      \label{eq:gradient_log_joint_density}
      -\gradient_{\mbw} \log \frac{q_t(\mbw)}{p(\mby, \mbw)}
      =
      \mbb - \mbA \mbw + \mbSigma_t^{-1} (\mbw - \mbmu_t).
    \end{equation}
    Taking the expectation of \cref{eq:gradient_log_joint_density} under $q_t$
    and using $\E_{q_t}[\mbw] = \mbmu_t$ gives
    \begin{align}
      -\E_{q_t}
      \left[
        \gradient_{\mbw} \log \frac{q_t(\mbw)}{p(\mby, \mbw)}
      \right]
      =
      \mbb - \mbA \mbmu_t,
    \end{align}
    which proves \cref{eq:mean_dynamics}.

    For the covariance dynamics, set $\mbd = \mbw - \mbmu_t$ so that
    $\E_{q_t}[\mbd] = \mbzero$ and $\E_{q_t}[\mbd \mbd^\top] = \mbSigma_t$.
    Then
    \begin{align}
      \E_{q_t}
      \left[
        \left(
          -\gradient_{\mbw} \log \frac{q_t(\mbw)}{p(\mby, \mbw)}
        \right)
        \mbd^\top
      \right]
      & =
      \E_{q_t}
      \left[
        (\mbb - \mbA \mbw)\mbd^\top
      \right]
      +
      \E_{q_t}
      \left[
        \mbSigma_t^{-1} \mbd \mbd^\top
      \right]
      \nonumber \\
      & =
      -\mbA \mbSigma_t + \mbI_d.
    \end{align}
    Indeed, the first term becomes
    \begin{align}
      \E_{q_t}
      \left[
        (\mbb - \mbA (\mbmu_t + \mbd))\mbd^\top
      \right]
      =
      -\mbA \E_{q_t}[\mbd \mbd^\top]
      =
      -\mbA \mbSigma_t,
    \end{align}
    and the second simplifies to
    \begin{align}
      \E_{q_t}
      \left[
        \mbSigma_t^{-1}\mbd \mbd^\top
      \right]
      =
      \mbSigma_t^{-1}\mbSigma_t
      =
      \mbI_d.
    \end{align}
    Plugging this identity and its transpose into
    \cref{eq:ode-wasserstein} yields
    \begin{align}
      \dot \mbSigma_t
      =
      (\mbI_d - \mbA \mbSigma_t)
      +
      (\mbI_d - \mbA \mbSigma_t)^\top
      =
      2 \mbI_d - \mbA \mbSigma_t - \mbSigma_t \mbA,
    \end{align}
    because $\mbA$ is symmetric.
  \end{proof}
\end{proposition}

\begin{theorem}[Explicit variational trajectory and convergence]
  \label{thm:vi-trajectory}
  Let $\mbmu_\infty \equiv \mbA^{-1}\mbb$ and
  $\mbSigma_\infty \equiv \mbA^{-1}$. Then the solution of
  \cref{eq:mean_dynamics,eq:cov_dynamics} is
  \begin{align}
    \label{eq:mean_solution}
    \mbmu_t
    & =
    \mbmu_\infty + e^{-\mbA t}(\mbmu_0 - \mbmu_\infty),
    \\
    \label{eq:cov_solution}
    \mbSigma_t
    & =
    \mbSigma_\infty
    + e^{-\mbA t}(\mbSigma_0 - \mbSigma_\infty)e^{-\mbA t}.
  \end{align}
  In particular, $(\mbmu_t, \mbSigma_t)$ converges exponentially fast to the
  exact posterior parameters from \cref{eq:mu_solution}.

  \begin{proof}
    The fixed point of \cref{eq:mean_dynamics} is $\mbmu_\infty = \mbA^{-1}\mbb$.
    Writing $\delta \mbmu_t \equiv \mbmu_t - \mbmu_\infty$, we obtain
    \begin{align}
      \dot{\delta \mbmu_t} = -\mbA \delta \mbmu_t,
    \end{align}
    hence $\delta \mbmu_t = e^{-\mbA t} \delta \mbmu_0$, which proves
    \cref{eq:mean_solution}.

    Likewise, $\mbSigma_\infty = \mbA^{-1}$ solves the stationary equation $2 \mbI_d - \mbA \mbSigma_\infty - \mbSigma_\infty \mbA = \mbzero$.
    Set $\Delta_t \equiv \mbSigma_t - \mbSigma_\infty$. Then
    \begin{align}
      \dot \Delta_t = -\mbA \Delta_t - \Delta_t \mbA.
    \end{align}
    Since $\mbA$ is symmetric, the function $e^{-\mbA t}\Delta_0 e^{-\mbA t}$ satisfies the same ODE and initial
    condition, proving \cref{eq:cov_solution}.
    Because $\mbA \succ 0$, the matrix exponential decays exponentially, so $\mbmu_t \to \mbmu_\infty$ and $\mbSigma_t \to \mbSigma_\infty$.
    Tese limits are exactly the posterior mean and covariance.
  \end{proof}
\end{theorem}

\subsection{Empirical and generalization risks}

\begin{definition}[Empirical and population risks]
  For a deterministic predictor $\mbw$, define the squared-loss risks
  \begin{align}
    \cR_{\text{emp}}(\mbw)
    & = \frac{1}{2n}\norm{\mby - \mbX \mbw}^2,
    \\
    \cR_{\text{gen}}(\mbw)
    & = \frac{1}{2}\E_{p(\mbx, y)} (y - \mbx^\top \mbw)^2.
  \end{align}
  Along the Gaussian variational trajectory, we are interested in the expected
  risks
  \begin{align}
    \mathcal{L}_{\text{emp}}(t)
    & \equiv \E_{q_t}\cR_{\text{emp}}(\mbw),
    \\
    \mathcal{L}_{\text{gen}}(t)
    & \equiv \E_{q_t}\cR_{\text{gen}}(\mbw).
  \end{align}
\end{definition}

\begin{proposition}[Exact risks under a Gaussian predictor]
  \label{prop:gaussian-risks}
  For any Gaussian predictor $q_t(\mbw) = \cN(\mbmu_t, \mbSigma_t)$,
  \begin{align}
    \label{eq:emp_risk_gaussian}
    \mathcal{L}_{\text{emp}}(t)
    & =
    \frac{1}{2n}
    \left(
      \norm{\mby - \mbX \mbmu_t}^2
      + \Tr(\mbX^\top \mbX \mbSigma_t)
    \right),
    \\
    \label{eq:gen_risk_gaussian}
    \mathcal{L}_{\text{gen}}(t)
    & =
    \frac{1}{2}
    \left(
      (\mbmu_t - \mbw_\star)^\top
      \mbSigma_x
      (\mbmu_t - \mbw_\star)
      +
      \Tr(\mbSigma_x \mbSigma_t)
      +
      \sigma_\varepsilon^2
    \right).
  \end{align}

  \begin{proof}
    Let $\mbw \sim \cN(\mbmu_t, \mbSigma_t)$. Then
    \begin{align}
      \E_{q_t}\norm{\mby - \mbX \mbw}^2
      =
      \norm{\mby - \mbX \mbmu_t}^2
      + \Tr(\mbX^\top \mbX \mbSigma_t),
    \end{align}
    which proves \cref{eq:emp_risk_gaussian}. For the population risk, write
    $y = \mbx^\top \mbw_\star + \varepsilon$ and use independence of
    $\varepsilon$, $\mbx$, and $\mbw$:
    \begin{align}
      \E (y - \mbx^\top \mbw)^2
      & =
      \E\bigl[\mbx^\top (\mbw_\star - \mbw)\bigr]^2 + \sigma_\varepsilon^2
      \nonumber \\
      & =
      \E_{\mbx}
      \left[
        (\mbw_\star - \mbmu_t)^\top \mbx \mbx^\top (\mbw_\star - \mbmu_t)
      \right]
      +
      \E_{\mbx}\Tr(\mbx \mbx^\top \mbSigma_t)
      +
      \sigma_\varepsilon^2
      \nonumber \\
      & =
      (\mbmu_t - \mbw_\star)^\top \mbSigma_x (\mbmu_t - \mbw_\star)
      + \Tr(\mbSigma_x \mbSigma_t)
      + \sigma_\varepsilon^2.
    \end{align}
    Multiplying by $1/2$ gives \cref{eq:gen_risk_gaussian}.
  \end{proof}
\end{proposition}

\begin{corollary}[Exact risk decomposition along the VI flow]
  \label{cor:vi-risk-decomposition}
  Define
  \begin{align}
    \mbE_t
    & \equiv e^{-\mbA t},
    &
    \delta\mbmu_0
    & \equiv \mbmu_0 - \mbmu_\infty,
    &
    \Delta\mbSigma_0
    & \equiv \mbSigma_0 - \mbSigma_\infty.
  \end{align}
  Let $\mbr_\infty \equiv \mby - \mbX \mbmu_\infty$ and
  $\mbb_\infty \equiv \mbmu_\infty - \mbw_\star$. Then
  \begin{align}
    \label{eq:emp_risk_vi_solution}
    \mathcal{L}_{\text{emp}}(t)
    & =
    \mathcal{L}_{\text{emp},\infty}
    - \frac{1}{n}\mbr_\infty^\top \mbX \mbE_t \delta\mbmu_0
    + \frac{1}{2n}
    \delta\mbmu_0^\top \mbE_t \mbX^\top \mbX \mbE_t \delta\mbmu_0
    + \frac{1}{2n}\Tr(\mbX^\top \mbX \mbE_t \Delta\mbSigma_0 \mbE_t),
    \\
    \label{eq:gen_risk_vi_solution}
    \mathcal{L}_{\text{gen}}(t)
    & =
    \mathcal{L}_{\text{gen},\infty}
    + \mbb_\infty^\top \mbSigma_x \mbE_t \delta\mbmu_0
    + \frac{1}{2}
    \delta\mbmu_0^\top \mbE_t \mbSigma_x \mbE_t \delta\mbmu_0
    + \frac{1}{2}\Tr(\mbSigma_x \mbE_t \Delta\mbSigma_0 \mbE_t),
  \end{align}
  where
  \begin{align}
    \mathcal{L}_{\text{emp},\infty}
    & \equiv
    \frac{1}{2n}
    \left(
      \norm{\mbr_\infty}^2 + \Tr(\mbX^\top \mbX \mbSigma_\infty)
    \right),
    \\
    \mathcal{L}_{\text{gen},\infty}
    & \equiv
    \frac{1}{2}
    \left(
      \mbb_\infty^\top \mbSigma_x \mbb_\infty
      + \Tr(\mbSigma_x \mbSigma_\infty)
      + \sigma_\varepsilon^2
    \right).
  \end{align}

  \begin{proof}
    Substitute \cref{eq:mean_solution,eq:cov_solution} into
    \cref{eq:emp_risk_gaussian,eq:gen_risk_gaussian} and expand the resulting
    quadratic forms. The empirical and generalization fixed-point terms are the
    parts independent of $\mbE_t$; the linear and quadratic time-dependent
    contributions are exactly those displayed above.
  \end{proof}
\end{corollary}

\begin{corollary}[Matched interpolation regime]
  \label{cor:matched-interpolation}
  Assume $\mbSigma_x = \mbI_d$ and that the stationary predictor is
  well-specified in the sense that
  \begin{align}
    \mbmu_\infty = \mbw_\star,
    \qquad
    \mbr_\infty = \mbzero.
  \end{align}
  Define the excess risks
  \begin{align}
    \widetilde{\mathcal{L}}_{\text{emp}}(t)
    & \equiv
    \mathcal{L}_{\text{emp}}(t) - \mathcal{L}_{\text{emp},\infty},
    \\
    \widetilde{\mathcal{L}}_{\text{gen}}(t)
    & \equiv
    \mathcal{L}_{\text{gen}}(t) - \mathcal{L}_{\text{gen},\infty}.
  \end{align}
  Then
  \begin{align}
    \label{eq:emp_excess_risk_interpolation}
    \widetilde{\mathcal{L}}_{\text{emp}}(t)
    & =
    \frac{1}{2n}
    \delta\mbmu_0^\top \mbE_t \mbX^\top \mbX \mbE_t \delta\mbmu_0
    + \frac{1}{2n}\Tr(\mbX^\top \mbX \mbE_t \Delta\mbSigma_0 \mbE_t),
    \\
    \label{eq:gen_excess_risk_interpolation}
    \widetilde{\mathcal{L}}_{\text{gen}}(t)
    & =
    \frac{1}{2}\delta\mbmu_0^\top \mbE_t^2 \delta\mbmu_0
    + \frac{1}{2}\Tr(\mbE_t \Delta\mbSigma_0 \mbE_t).
  \end{align}

  \begin{proof}
    In \cref{eq:emp_risk_vi_solution,eq:gen_risk_vi_solution}, the linear terms
    vanish because $\mbr_\infty = \mbzero$ and $\mbb_\infty = \mbzero$, while
    $\mbSigma_x = \mbI_d$ simplifies the quadratic and trace terms.
  \end{proof}
\end{corollary}

\subsection{Spectral reduction and random-matrix asymptotics}

\begin{lemma}[Spectral form of the excess risks]
  \label{lem:spectral-risks}
  Let
  \begin{align}
    \widehat{\mbSigma}
    \equiv
    \frac{1}{n}\mbX^\top \mbX
    =
    \mbU \operatorname{diag}(\nu_1, \dots, \nu_d)\mbU^\top.
  \end{align}
  Then
  \begin{align}
    \mbA
    =
    \mbU \operatorname{diag}(a_1, \dots, a_d)\mbU^\top,
    \qquad
    a_i = \frac{n}{\sigma^2}\nu_i + \frac{1}{\tau^2}.
  \end{align}
  Writing
  \begin{align}
    \bar{\mbd}_0
    & \equiv \mbU^\top \delta\mbmu_0,
    &
    \bar{\mbS}_0
    & \equiv \mbU^\top \Delta\mbSigma_0 \mbU,
  \end{align}
  the excess risks in \cref{eq:emp_excess_risk_interpolation,eq:gen_excess_risk_interpolation}
  become
  \begin{align}
    \label{eq:emp_risk_spectral_sum}
    \widetilde{\mathcal{L}}_{\text{emp}}(t)
    & =
    \frac{1}{2}
    \sum_{i=1}^d
    \nu_i \bigl(\bar d_{0,i}^2 + (\bar{\mbS}_0)_{ii}\bigr)e^{-2 a_i t},
    \\
    \label{eq:gen_risk_spectral_sum}
    \widetilde{\mathcal{L}}_{\text{gen}}(t)
    & =
    \frac{1}{2}
    \sum_{i=1}^d
    \bigl(\bar d_{0,i}^2 + (\bar{\mbS}_0)_{ii}\bigr)e^{-2 a_i t}.
  \end{align}

  \begin{proof}
    $\widehat{\mbSigma}$ and $\mbA$ are diagonalized by the same orthogonal matrix $\mbU$.
    In that basis,
    \begin{align}
      \mbE_t = e^{-\mbA t}
      =
      \mbU \operatorname{diag}(e^{-a_1 t}, \dots, e^{-a_d t}) \mbU^\top.
    \end{align}
    Substituting this into
    \cref{eq:emp_excess_risk_interpolation,eq:gen_excess_risk_interpolation}
    immediately yields the stated sums.
  \end{proof}
\end{lemma}

\begin{assumption}[Self-averaging initialization]
  \label{ass:self-averaging}
  In the large-system limit, assume that the initial mode weights are
  asymptotically uncorrelated with the eigenbasis of $\widehat{\mbSigma}$, so
  that
  \begin{align}
    \bar d_{0,i}^2 + (\bar{\mbS}_0)_{ii}
    \approx
    \frac{\norm{\delta\mbmu_0}^2 + \Tr(\Delta\mbSigma_0)}{d}
  \end{align}
  uniformly over $i$.
\end{assumption}

\begin{proposition}[Self-averaged loss formulas]
  \label{prop:self-averaged-losses}
  Under \cref{ass:self-averaging}, if
  \begin{align}
    C_0 \equiv \frac{1}{2}
    \left(
      \norm{\delta\mbmu_0}^2 + \Tr(\Delta\mbSigma_0)
    \right),
  \end{align}
  then
  \begin{align}
    \widetilde{\mathcal{L}}_{\text{emp}}(t)
    & \approx
    C_0 \frac{1}{d}\sum_{i=1}^d
    \nu_i
    \exp\left[
      -2 \left(
        \frac{n}{\sigma^2}\nu_i + \frac{1}{\tau^2}
      \right)t
    \right],
    \\
    \widetilde{\mathcal{L}}_{\text{gen}}(t)
    & \approx
    C_0 \frac{1}{d}\sum_{i=1}^d
    \exp\left[
      -2 \left(
        \frac{n}{\sigma^2}\nu_i + \frac{1}{\tau^2}
      \right)t
    \right].
  \end{align}
  If, moreover, $d,n \to \infty$ with aspect ratio $\lambda = d/n < 1$, then
  \begin{align}
    \label{eq:vi_emp_loss_anal}
    \widetilde{\mathcal{L}}_{\text{emp}}(t)
    & \approx
    C_0\,
    \E_{\nu \sim \mathrm{MP}(\lambda)}
    \left[
      \nu \exp\left(
        -2 \left(
          \frac{n}{\sigma^2}\nu + \frac{1}{\tau^2}
        \right)t
      \right)
    \right],
    \\
    \label{eq:vi_gen_loss_anal}
    \widetilde{\mathcal{L}}_{\text{gen}}(t)
    & \approx
    C_0\,
    \E_{\nu \sim \mathrm{MP}(\lambda)}
    \left[
      \exp\left(
        -2 \left(
          \frac{n}{\sigma^2}\nu + \frac{1}{\tau^2}
        \right)t
      \right)
    \right].
  \end{align}

  \begin{proof}
    Replace the mode-dependent prefactors in
    \cref{eq:emp_risk_spectral_sum,eq:gen_risk_spectral_sum} by the common
    self-averaged value from \cref{ass:self-averaging}. This yields the finite
    sums. The Marchenko-Pastur formulas follow by replacing the empirical
    spectral average with its large-$d,n$ limit.
  \end{proof}
\end{proposition}

\begin{theorem}[Late-time asymptotics and grokking delay]
  \label{app:thm:grokking-delay}
  Assume the setting of \cref{prop:self-averaged-losses} with $\lambda < 1$,
  and let
  \begin{align}
    \nu_\pm = (1 \pm \sqrt{\lambda})^2,
    \qquad
    \kappa
    =
    2\left(
      \frac{n}{\sigma^2}(1 - \sqrt{\lambda})^2 + \frac{1}{\tau^2}
    \right).
  \end{align}
  Then, as $t \to \infty$,
  \begin{align}
    \label{eq:vi_emp_late_time}
    \widetilde{\mathcal{L}}_{\text{emp}}(t)
    & \sim
    K_{\text{emp}}\, t^{-3/2} e^{-\kappa t},
    \\
    \label{eq:vi_gen_late_time}
    \widetilde{\mathcal{L}}_{\text{gen}}(t)
    & \sim
    K_{\text{gen}}\, t^{-3/2} e^{-\kappa t},
  \end{align}
  with prefactor ratio
  \begin{equation}
    \label{eq:vi_prefactor_ratio}
    \frac{K_{\text{gen}}}{K_{\text{emp}}}
    = \frac{1}{\nu_-}
    = \frac{1}{(1 - \sqrt{\lambda})^2}.
  \end{equation}
  If the grokking time is defined by the $95\%$ accuracy threshold
  \begin{align}
    \widetilde{\mathcal{L}}(t^\star) = \frac{\epsilon}{8},
  \end{align}
  then the leading train-generalization delay is
  \begin{equation}
    \label{eq:vi_grokking_time}
    \Delta t_{\text{grok}}
    \equiv
    t^\star_{\text{gen}} - t^\star_{\text{emp}}
    \simeq
    \frac{
      \log\left(\frac{1}{1 - \sqrt{\lambda}}\right)
    }{
      \frac{n}{\sigma^2}(1 - \sqrt{\lambda})^2 + \frac{1}{\tau^2}
    }.
  \end{equation}

  \begin{proof}
    The Marchenko-Pastur density on $[\nu_-, \nu_+]$ behaves near its lower
    edge as
    \begin{align}
      p_{\mathrm{MP}}(\nu) \propto \sqrt{\nu - \nu_-}.
    \end{align}
    Write $\nu = \nu_- + u$. In
    \cref{eq:vi_emp_loss_anal,eq:vi_gen_loss_anal}, the exponential factor is
    $e^{-\kappa t} e^{-2(n/\sigma^2) u t}$. Hence both integrals are governed
    by the same edge contribution
    \begin{align}
      e^{-\kappa t}
      \int_0^\infty u^{1/2} e^{-2(n/\sigma^2)u t}\dd u
      \propto
      t^{-3/2} e^{-\kappa t},
    \end{align}
    which proves \cref{eq:vi_emp_late_time,eq:vi_gen_late_time}. The empirical
    loss contains an extra factor $\nu = \nu_- + u$, which contributes only
    the multiplicative constant $\nu_-$ at leading order; this yields
    \cref{eq:vi_prefactor_ratio}.

    To obtain the delay, solve
    $K t^{-3/2} e^{-\kappa t} = \epsilon/8$ asymptotically for $t$. The
    subleading $t^{-3/2}$ correction is the same for the empirical and
    generalization losses, so the leading difference between the two solutions
    is
    \begin{align}
      \Delta t_{\text{grok}}
      \simeq
      \frac{\log(K_{\text{gen}}/K_{\text{emp}})}{\kappa}.
    \end{align}
    Using \cref{eq:vi_prefactor_ratio} and the definition of $\kappa$ gives
    \cref{eq:vi_grokking_time}.
  \end{proof}
\end{theorem}

\subsection{Accuracy and uncertainty of the sampled predictor}

\begin{proposition}[Exact accuracy of the sampled VI predictor]
  \label{prop:sampled-accuracy}
  Let $\mbw \sim q_t$. For a fixed datapoint $(\mbx, y)$, define
  \begin{align}
    m_t(\mbx, y)
    & \equiv y - \mbx^\top \mbmu_t,
    &
    v_t(\mbx)
    & \equiv \mbx^\top \mbSigma_t \mbx.
  \end{align}
  Then the residual
  \begin{align}
    r_t(\mbx, y; \mbw) \equiv y - \mbx^\top \mbw
  \end{align}
  is conditionally Gaussian under $q_t$, and for any tolerance $\epsilon > 0$
  the empirical and population accuracies are
  \begin{align}
    \label{eq:emp_accuracy_sampled_predictor}
    \mathcal{A}_{\text{emp}}(t)
    & =
    \frac{1}{n}\sum_{i=1}^n
    \left[
      \Phi\left(
        \frac{\sqrt{\epsilon} - m_i(t)}{\sqrt{v_i(t)}}
      \right)
      -
      \Phi\left(
        \frac{-\sqrt{\epsilon} - m_i(t)}{\sqrt{v_i(t)}}
      \right)
    \right],
    \\
    \label{eq:gen_accuracy_sampled_predictor}
    \mathcal{A}_{\text{gen}}(t)
    & =
    \E_{p(\mbx)}
    \left[
      \Phi\left(
        \frac{\sqrt{\epsilon} - \mbx^\top(\mbw_\star - \mbmu_t)}
        {\sqrt{\sigma_\varepsilon^2 + \mbx^\top \mbSigma_t \mbx}}
      \right)
      -
      \Phi\left(
        \frac{-\sqrt{\epsilon} - \mbx^\top(\mbw_\star - \mbmu_t)}
        {\sqrt{\sigma_\varepsilon^2 + \mbx^\top \mbSigma_t \mbx}}
      \right)
    \right],
  \end{align}
  where $m_i(t) = y_i - \mbx_i^\top \mbmu_t$ and $v_i(t) = \mbx_i^\top \mbSigma_t \mbx_i$, and $\Phi$ is the standard Gaussian CDF.

  \begin{proof}
    Since $\mbw \sim \cN(\mbmu_t, \mbSigma_t)$, the scalar random variable
    $\mbx^\top \mbw$ is Gaussian with mean $\mbx^\top \mbmu_t$ and variance
    $\mbx^\top \mbSigma_t \mbx$. Therefore
    \begin{align}
      r_t(\mbx, y; \mbw)\mid (\mbx, y)
      \sim
      \cN(m_t(\mbx, y), v_t(\mbx)).
    \end{align}
    The formulas follow by evaluating the Gaussian probability of the interval
    $[-\sqrt{\epsilon}, \sqrt{\epsilon}]$. For the generalization expression,
    use $y = \mbx^\top \mbw_\star + \varepsilon$ and absorb the label noise into
    the conditional variance.
  \end{proof}
\end{proposition}

\begin{proposition}[Entropy decomposition of predictive uncertainty]
  \label{prop:entropy-decomposition}
  Assume the predictive model
  \begin{align}
    y \mid \mbx, \mbw \sim \cN(\mbx^\top \mbw, \sigma_\varepsilon^2).
  \end{align}
  Then the predictive entropy decomposes as
  \begin{align}
    \label{eq:entropy_decomposition}
    \mathbb{H}[y \mid \mbx, \mathcal{D}, t]
    =
    \E_{q_t(\mbw)} \mathbb{H}[y \mid \mbx, \mbw]
    +
    \mathbb{I}_t(y; \mbw \mid \mbx, \mathcal{D}).
  \end{align}
  Moreover, the predictive distribution is
  \begin{align}
    \label{eq:predictive_distribution_vi}
    p_t(y \mid \mbx, \mathcal{D})
    =
    \cN\left(
      y;
      \mbx^\top \mbmu_t,
      \sigma_\varepsilon^2 + \mbx^\top \mbSigma_t \mbx
    \right),
  \end{align}
  and the pointwise epistemic uncertainty is
  \begin{align}
    \label{eq:epistemic_entropy_pointwise}
    \text{EU}(\mbx, t)
    =
    \frac{1}{2}\log\left(
      1 + \frac{\mbx^\top \mbSigma_t \mbx}{\sigma_\varepsilon^2}
    \right).
  \end{align}
  Using the explicit covariance trajectory from \cref{eq:cov_solution}, this
  can also be written as
  \begin{align}
    \label{eq:epistemic_entropy_vi_solution}
    \text{EU}(\mbx, t)
    =
    \frac{1}{2}\log\left(
      1 + \frac{
        \mbx^\top \mbSigma_\infty \mbx
        +
        \mbx^\top \mbE_t \Delta\mbSigma_0 \mbE_t \mbx
      }{\sigma_\varepsilon^2}
    \right).
  \end{align}

  \begin{proof}
    The entropy decomposition is the standard identity
    $H(Y \mid X) = \E H(Y \mid X, W) + I(Y; W \mid X)$. Since both
    $p(y \mid \mbx, \mbw)$ and $q_t(\mbw)$ are Gaussian, marginalizing over
    $\mbw$ gives \cref{eq:predictive_distribution_vi}. The differential entropy
    of a Gaussian $\cN(0, s^2)$ is $\frac{1}{2}\log(2 \pi e s^2)$, so the
    aleatoric term is $\frac{1}{2}\log(2 \pi e \sigma_\varepsilon^2)$ and the
    epistemic contribution is the difference between the predictive and
    aleatoric entropies, which yields \cref{eq:epistemic_entropy_pointwise}.
  \end{proof}
\end{proposition}

\begin{remark}[Late-time accuracy closure]
  \label{rem:accuracy-closure}
  In the matched interpolation regime, the late-time residual is approximately
  centered and its variance is controlled by the excess risk:
  \begin{align}
    \E[r_t^2]
    \approx
    2 \widetilde{\mathcal{L}}(t).
  \end{align}
  Approximating the residual by a single effective Gaussian therefore gives
  \begin{align}
    \label{eq:vi_accuracy_erf_closure}
    \mathcal{A}_{\text{emp}/\text{gen}}(t)
    \approx
    \operatorname{erf}\left(
      \sqrt{
        \frac{\epsilon}{4 \widetilde{\mathcal{L}}_{\text{emp}/\text{gen}}(t)}
      }
    \right).
  \end{align}
  The $95\%$ grokking threshold used in \cref{app:thm:grokking-delay} then
  corresponds to $\widetilde{\mathcal{L}}(t^\star) = \epsilon / 8$ because
  $\operatorname{erf}(\sqrt{2}) \approx 0.95$.
\end{remark}

\begin{lemma}[Trace dynamics of the covariance]
  \label{lem:trace-dynamics}
  The covariance trace obeys
  \begin{align}
    \label{eq:trace_sigma_derivative}
    \frac{\dd}{\dd t}\Tr(\mbSigma_t)
    =
    2 \Tr(\mbI_d - \mbA \mbSigma_t).
  \end{align}
  In the eigenbasis of $\mbA$, if
  $\mbSigma_t = \mbSigma_\infty + \mbE_t \Delta\mbSigma_0 \mbE_t$ and
  $\Delta s_i$ denotes the $i$-th diagonal entry of $\mbU^\top \Delta\mbSigma_0 \mbU$,
  then
  \begin{align}
    \frac{\dd}{\dd t}\Tr(\mbSigma_t)
    =
    -2 \sum_{i=1}^d a_i \Delta s_i e^{-2 a_i t}.
  \end{align}

  \begin{proof}
    Taking the trace of \cref{eq:cov_dynamics} gives
    \begin{align}
      \frac{\dd}{\dd t}\Tr(\mbSigma_t)
      =
      2 \Tr(\mbI_d) - \Tr(\mbA \mbSigma_t) - \Tr(\mbSigma_t \mbA)
      =
      2 \Tr(\mbI_d - \mbA \mbSigma_t),
    \end{align}
    using cyclicity of the trace. The spectral formula follows by substituting
    the explicit solution from \cref{eq:cov_solution}.
  \end{proof}
\end{lemma}

\begin{lemma}[Population-averaged epistemic uncertainty]
  \label{thm:population-epistemic}
  The exact population-averaged epistemic uncertainty is
  \begin{align}
    \label{eq:population_epistemic_entropy_exact}
    \text{EU}(t)
    \equiv
    \E_{p(\mbx)} \text{EU}(\mbx, t)
    =
    \frac{1}{2}\E_{p(\mbx)}
    \log\left(
      1 + \frac{\mbx^\top \mbSigma_t \mbx}{\sigma_\varepsilon^2}
    \right).
  \end{align}
  Under isotropic Gaussian design $\mbx \sim \cN(\mbzero, \mbI_d)$ and the
  same self-averaging closure used above,
  \begin{align}
    \label{eq:population_epistemic_entropy_trace_closure}
    \text{EU}(t)
    \approx
    \frac{1}{2}\log\left(
      1 + \frac{\Tr(\mbSigma_t)}{\sigma_\varepsilon^2}
    \right).
  \end{align}
  If $\mbSigma_0 \prec \mbSigma_\infty$, this approximate epistemic curve is
  monotonically increasing; if $\mbSigma_0 \succ \mbSigma_\infty$, it is
  monotonically decreasing; and if $\mbSigma_0 = \mbSigma_\infty$, it is
  constant.

  \begin{proof}
    The exact identity is just the average of
    \cref{eq:epistemic_entropy_pointwise}. Under isotropic Gaussian design, the
    quadratic form $\mbx^\top \mbSigma_t \mbx$ concentrates around its trace in
    high dimension, yielding \cref{eq:population_epistemic_entropy_trace_closure}.
    Because $u \mapsto \frac{1}{2}\log(1 + u/\sigma_\varepsilon^2)$ is
    increasing, the monotonicity of the approximate entropy curve matches the
    monotonicity of $\Tr(\mbSigma_t)$. The explicit solution
    \cref{eq:cov_solution} shows that each spectral mode evolves as
    $a_i^{-1} + \Delta s_i e^{-2 a_i t}$, so the sign of $\Delta s_i$ determines
    whether that mode increases or decreases. If
    $\mbSigma_0 \prec \mbSigma_\infty$, then all $\Delta s_i < 0$ and
    \cref{lem:trace-dynamics} implies that $\Tr(\mbSigma_t)$ increases
    monotonically; the opposite case is analogous, and equality gives a
    constant curve.
  \end{proof}
\end{lemma}

\begin{corollary}[Isotropic initialization]
  \label{cor:isotropic-init}
  If $\mbSigma_0 = s_0 \mbI_d$, then
  \begin{align}
    \frac{1}{d}\Tr(\mbSigma_t)
    \approx
    \E_{\nu \sim \mathrm{MP}(\lambda)}
    \left[
      \frac{1}{a(\nu)}
      +
      \left(
        s_0 - \frac{1}{a(\nu)}
      \right)e^{-2 a(\nu) t}
    \right],
    \qquad
    a(\nu) = \frac{n}{\sigma^2}\nu + \frac{1}{\tau^2}.
  \end{align}
  In particular, when $\mbSigma_0 = \mbzero$,
  \begin{align}
    \frac{1}{d}\Tr(\mbSigma_t)
    \approx
    \E_{\nu \sim \mathrm{MP}(\lambda)}
    \left[
      \frac{1 - e^{-2 a(\nu) t}}{a(\nu)}
    \right].
  \end{align}

  \begin{proof}
    Under isotropic initialization, $\Delta\mbSigma_0$ is diagonal in every
    eigenbasis, so the spectral representation of \cref{eq:cov_solution}
    reduces to a scalar mode-by-mode formula. Averaging over the empirical
    spectrum and then passing to the Marchenko-Pastur limit gives the result.
  \end{proof}
\end{corollary}

\begin{proposition}[Epistemic uncertainty of the threshold model]
  \label{app:prop:threshold-uncertainty}
  Define the thresholded success indicator
  \begin{align}
    Z_\epsilon(\mbx, y; \mbw)
    \equiv
    \mathbbm{1}\{(y - \mbx^\top \mbw)^2 \le \epsilon\}.
  \end{align}
  For fixed $(\mbx, y)$, the success probability under $q_t$ is
  \begin{align}
    \label{eq:threshold_success_probability}
    p_t(\mbx, y)
    & =
    \Phi\left(
      \frac{\sqrt{\epsilon} - m_t(\mbx, y)}{\sqrt{v_t(\mbx)}}
    \right)
    -
    \Phi\left(
      \frac{-\sqrt{\epsilon} - m_t(\mbx, y)}{\sqrt{v_t(\mbx)}}
    \right),
  \end{align}
  and
  \begin{align}
    Z_\epsilon(\mbx, y; \mbw) \mid (\mbx, y, \mathcal{D}, t)
    \sim
    \mathrm{Bernoulli}(p_t(\mbx, y)).
  \end{align}
  Hence the threshold-model epistemic uncertainty is exactly
  \begin{align}
    \label{eq:threshold_epistemic_uncertainty}
    \text{EU}_t(Z_\epsilon; \mbw \mid \mbx, y, \mathcal{D})
    =
    h_2\bigl(p_t(\mbx, y)\bigr),
  \end{align}
  where $h_2(p) = -p \log p - (1-p)\log(1-p)$ is the binary entropy.
  In the late-time self-averaging regime,
  \begin{align}
    \label{eq:threshold_success_probability_closure}
    p_t
    \approx
    \operatorname{erf}\left(
      \sqrt{
        \frac{\epsilon}{4 \widetilde{\mathcal{L}}(t)}
      }
    \right),
  \end{align}
  and therefore
  \begin{align}
    \label{eq:threshold_epistemic_uncertainty_closure}
    \text{EU}_{\mathrm{th}}(t)
    \approx
    h_2\!\left(
      \operatorname{erf}\left(
        \sqrt{
          \frac{\epsilon}{4 \widetilde{\mathcal{L}}(t)}
        }
      \right)
    \right).
  \end{align}

  \begin{proof}
    The exact success probability is the Gaussian interval probability from
    \cref{prop:sampled-accuracy}. Conditional on $\mbw$, the variable
    $Z_\epsilon$ is deterministic, so its conditional entropy vanishes. The
    predictive distribution is therefore Bernoulli with entropy
    $h_2(p_t(\mbx, y))$, and the full uncertainty is epistemic. The closure
    formula follows from the same effective-Gaussian approximation that relates
    the residual variance to the excess risk at late times.
  \end{proof}
\end{proposition}

\begin{remark}[Threshold uncertainty versus grokking]
  The binary entropy $h_2(p)$ is maximized at $p = 1/2$, so the threshold-model
  epistemic uncertainty typically follows a non-monotone trajectory: it rises
  while predictions are ambiguous and then falls as the model becomes
  confidently correct. Using \cref{eq:threshold_success_probability_closure},
  the peak occurs when
  \begin{align}
    \label{eq:threshold_eu_peak_condition}
    \widetilde{\mathcal{L}}(t_{\mathrm{peak}})
    =
    \frac{\epsilon}{4(\operatorname{erf}^{-1}(1/2))^2},
  \end{align}
  and numerically
  \begin{align}
    \frac{1}{4(\operatorname{erf}^{-1}(1/2))^2}
    \approx 1.10.
  \end{align}
  Thus the peak occurs when the loss is still of order $\epsilon$, which is much
  earlier than the $95\%$ grokking threshold
  $\widetilde{\mathcal{L}}(t^\star) = \epsilon/8$. Thus the entropy peak and
  the grokking delay are distinct observables, even though both are controlled
  by the same slow edge-of-spectrum modes in the Marchenko-Pastur regime.
\end{remark}

\begin{theorem}[Delay between threshold-uncertainty peaks]
  \label{app:prop:threshold-eu-peak-delay}
  Assume the setting of \cref{app:thm:grokking-delay,app:prop:threshold-uncertainty},
  and suppose the empirical and generalization threshold-model uncertainty peaks
  occur in the late-time regime where
  \cref{eq:threshold_epistemic_uncertainty_closure,eq:vi_emp_late_time,eq:vi_gen_late_time}
  apply. Then
  \begin{align}
    \Delta t_{\text{EU peak}}(\lambda)
    & \equiv
    t_{\mathrm{peak},\mathrm{gen}} - t_{\mathrm{peak},\mathrm{emp}}
    \nonumber \\
    & \simeq
    \frac{
      \log\left(\frac{1}{1 - \sqrt{\lambda}}\right)
    }{
      \frac{n}{\sigma^2}(1 - \sqrt{\lambda})^2 + \frac{1}{\tau^2}
    },
    \label{eq:threshold_eu_peak_delay_lambda}
  \end{align}
  up to threshold-dependent subleading corrections.

  \begin{proof}
    For $\alpha \in \{\mathrm{emp}, \mathrm{gen}\}$, write
    \begin{align}
      h_{\alpha}(t)
      & \approx
      h_2\bigl(p_{\alpha}(t)\bigr),
      &
      p_{\alpha}(t)
      & \approx
      \operatorname{erf}\left(
        \sqrt{
          \frac{\epsilon}{4 \widetilde{\mathcal{L}}_{\alpha}(t)}
        }
      \right),
    \end{align}
    as in \cref{eq:threshold_epistemic_uncertainty_closure}. Since
    $h_2'(p) = \log\!\bigl(\frac{1-p}{p}\bigr)$, the binary entropy is uniquely
    maximized at $p = 1/2$. Moreover, the map
    \begin{align}
      \widetilde{\mathcal{L}}
      \longmapsto
      \operatorname{erf}\left(
        \sqrt{
          \frac{\epsilon}{4 \widetilde{\mathcal{L}}}
        }
      \right)
    \end{align}
    is strictly decreasing in $\widetilde{\mathcal{L}}$, while in the late-time
    regime \cref{eq:vi_emp_late_time,eq:vi_gen_late_time} imply that each loss is
    strictly decreasing for large $t$. Hence each late-time uncertainty curve
    reaches its maximum exactly when $p_{\alpha}(t)=1/2$, namely when
    \begin{align}
      \widetilde{\mathcal{L}}_{\alpha}(t_{\mathrm{peak},\alpha})
      =
      c_{\mathrm{peak}}
      \equiv
      \frac{\epsilon}{4(\operatorname{erf}^{-1}(1/2))^2},
    \end{align}
    which is \cref{eq:threshold_eu_peak_condition}.

    Now use the late-time asymptotics from \cref{app:thm:grokking-delay}. For each
    $\alpha \in \{\mathrm{emp}, \mathrm{gen}\}$,
    \begin{align}
      \widetilde{\mathcal{L}}_{\alpha}(t)
      \sim
      K_{\alpha}\, t^{-3/2} e^{-\kappa t}.
    \end{align}
    Therefore the peak time satisfies
    \begin{align}
      c_{\mathrm{peak}}
      =
      K_{\alpha}\, t_{\mathrm{peak},\alpha}^{-3/2}
      e^{-\kappa t_{\mathrm{peak},\alpha}}
      \bigl(1 + o(1)\bigr).
    \end{align}
    Taking logarithms gives
    \begin{align}
      \kappa t_{\mathrm{peak},\alpha}
      =
      \log\!\left(\frac{K_{\alpha}}{c_{\mathrm{peak}}}\right)
      - \frac{3}{2}\log t_{\mathrm{peak},\alpha}
      + o(1).
    \end{align}
    Subtracting the empirical and generalization versions yields
    \begin{align}
      \kappa \Delta t_{\text{EU peak}}
      \simeq
      \log\!\left(\frac{K_{\text{gen}}}{K_{\text{emp}}}\right)
      - \frac{3}{2}
      \log\!\left(
        \frac{t_{\mathrm{peak},\mathrm{gen}}}
        {t_{\mathrm{peak},\mathrm{emp}}}
      \right),
    \end{align}
    up to $o(1)$ terms. The logarithmic ratio of peak times is a subleading
    correction coming from the common $t^{-3/2}$ factor, exactly as in the proof
    of \cref{app:thm:grokking-delay}. Thus the leading contribution is
    \begin{align}
      \Delta t_{\text{EU peak}}
      \simeq
      \frac{
        \log(K_{\text{gen}} / K_{\text{emp}})
      }{\kappa},
    \end{align}
    with threshold dependence entering only through the discarded subleading
    terms. Finally, \cref{eq:vi_prefactor_ratio} gives
    \begin{align}
      \log\!\left(\frac{K_{\text{gen}}}{K_{\text{emp}}}\right)
      =
      -2 \log(1 - \sqrt{\lambda}),
    \end{align}
    and
    \begin{align}
      \kappa
      =
      2\left(
        \frac{n}{\sigma^2}(1 - \sqrt{\lambda})^2 + \frac{1}{\tau^2}
      \right).
    \end{align}
    Substituting these expressions gives
    \cref{eq:threshold_eu_peak_delay_lambda}. Thus the threshold-model
    uncertainty peak and the grokking delay are different observables, but near
    $\lambda \simeq 1$ they are governed by the same edge-of-spectrum slowdown.
  \end{proof}
\end{theorem}

\section{Reproducibility}
\label{sec:reproducibility}

We summarize here the full experimental specification needed to reproduce the reported results. Across all experiments, we keep the task family, the transformer architecture, and the evaluation protocol fixed, and vary only the quantities explicitly listed.

\subsection{Task distribution and sequence construction}

Each example is a two-variable modular arithmetic rule
\[
  z=(ax+by) \bmod 29,
\]
where $(a,b)$ identifies the task and $(x,y)$ is the input pair.
The arithmetic base, vocabulary size, and modulus are all $29$.
The input space therefore contains $29^2=841$ distinct pairs $(x,y)$.

For each experimental setting, we first sample $n_{\mathrm{task}}$ in-distribution coefficient pairs and then apply a parallelogram augmentation, which expands the in-distribution family to $4n_{\mathrm{task}}$ related tasks, check  \citet{he2024learning} for details on the augmentation procedure.
Out-of-distribution evaluation uses $256$ additional tasks drawn from the complement of the in-distribution set.

Training and evaluation are performed on sequences of $32$ triplets, corresponding to a token window of length $96$. Only the answer token $z$ in each triplet is supervised, so learning is next-token prediction on answer positions only. The nominal batch size is $1024$ for both training and evaluation. Training uses a virtual in-distribution set multiplier of $1000$ to increase the number of sampled batches per optimization run. Every evaluation reports three splits: in-distribution validation, out-of-distribution training-point evaluation, and out-of-distribution validation-point evaluation.

\begin{table}[t]
  \centering
  \small
  \caption{Task-generation and sequence-construction choices shared across the main experiments.}
  \label{tab:reprod-data}
  \begin{tabular}{p{0.35\linewidth}p{0.56\linewidth}}
    \toprule
    \textbf{Quantity} & \textbf{Value} \\
    \midrule
    Arithmetic rule & $z=(ax+by) \bmod 29$ \\
    Vocabulary size & $29$ \\
    Task dimension & $2$ \\
    Input-pair dimension & $2$ \\
    Input space size & $29^2=841$ \\
    Base in-distribution task count & $n_{\mathrm{task}}$ \\
    Parallelogram augmentation & Enabled in all reported runs \\
    Effective in-distribution task count & $4n_{\mathrm{task}}$ \\
    Out-of-distribution task count & $256$ \\
    Training-point fraction & $\{0.3,0.4,0.5,0.6,0.7,0.8\}$, depending on sweep \\
    Context length & $32$ triplets \\
    Token window & $96$ tokens \\
    Nominal batch size & $1024$ \\
    \bottomrule
  \end{tabular}
\end{table}

\subsection{Shared model architecture}

All methods use the same decoder-only transformer. The backbone has $6$ layers, $4$ attention heads, embedding width $512$, and feed-forward width $2048$. The vocabulary size is $29$ and the maximum token window is $96$. The model uses LayerNorm with $\epsilon=10^{-5}$, full rotary positional encoding, tied input and output embeddings, and no bias terms in attention, MLP, or output projections. All runs use mixed bfloat16 precision and one accelerator per run.

\begin{table}[t]
  \centering
  \small
  \caption{Shared architecture for MAP, IVON, and Laplace evaluations.}
  \label{tab:reprod-model}
  \begin{tabular}{p{0.35\linewidth}p{0.56\linewidth}}
    \toprule
    \textbf{Quantity} & \textbf{Value} \\
    \midrule
    Architecture & Decoder-only transformer \\
    Layers & $6$ \\
    Attention heads & $4$ \\
    Embedding width & $512$ \\
    Feed-forward width & $2048$ \\
    Vocabulary size & $29$ \\
    Maximum token window & $96$ \\
    Normalization & LayerNorm, $\epsilon=10^{-5}$ \\
    Positional encoding & Rotary\\
    Embedding tying & Enabled \\
    Bias terms & Disabled \\
    Numerical precision & Mixed bfloat16 \\
    \bottomrule
  \end{tabular}
\end{table}

\subsection{Training and evaluation protocol}

The default optimization horizon is $100{,}000$ update steps. For the largest base task setting, $n_{\mathrm{task}}=128$, the horizon is doubled to $200{,}000$ steps.
For every validation split we report accuracy, expected calibration error, mean log-likelihood, aleatoric uncertainty, and epistemic uncertainty. Accuracy, calibration, and uncertainty are reported both over all supervised answer positions in the sequence and on the final answer token only.

\subsection{Method-specific settings}

The MAP baseline is trained with AdamW using peak learning rate $1.5\times 10^{-4}$, linear warmup for $1000$ steps from $1\%$ of the peak rate, and cosine decay to $10\%$ of the peak rate. The momentum parameters are $(0.9,0.98)$, the numerical stability constant is $10^{-8}$, and the gradient norm is clipped at $1.0$.

IVON uses learning rate $0.5$, $2000$ warmup steps, the same $1\%\rightarrow100\%\rightarrow10\%$ cosine schedule envelope, $\beta_1=0.9$, $\beta_2=0.99999$, Hessian initialization $1$, clip radius $10^{-3}$, one Monte Carlo weight sample during training.
The effective sample size is set automatically to the number of in-distribution training point pairs multiplied by the effective in-distribution task count after augmentation.

The Laplace model is fit post hoc on the final MAP solution. The approximation is restricted to the last layer, uses a Kronecker-factored Hessian, and sets the prior precision equal to the MAP weight decay (no marginal likelihood optimization). The Laplace fit is always performed on clean in-distribution training data, with full context sequences of $32$ triplets.

\begin{table}[t]
  \centering
  \small
  \caption{Optimization settings for the two trained model families.}%
  \label{tab:reprod-optim}%
  \begin{tabular}{p{0.27\linewidth}p{0.3\linewidth}p{0.3\linewidth}}
    \toprule
    \textbf{Field} & \textbf{MAP (AdamW)} & \textbf{IVON} \\
    \midrule
    Peak learning rate & $1.5\times 10^{-4}$ & $0.5$ \\
    Warmup steps & $1000$ & $2000$ \\
    Initial learning-rate scale & $0.01$ & $0.01$ \\
    Final learning-rate scale & $0.1$ & $0.1$ \\
    Schedule & Linear warmup + cosine decay & Linear warmup + cosine decay \\
    Weight decay & $1$ &$10^{-6}$  \\
    Momentum parameters & $(0.9,0.98)$ & $(0.9,0.99999)$ \\
    Stability / curvature init & $\epsilon=10^{-8}$ & Hessian init $=1$ \\
    Gradient control & Norm clip at $1.0$ & Clip radius $10^{-3}$ \\
    Training samples per step & Deterministic model & $1$ Monte Carlo sample \\
    Evaluation samples & Deterministic model & $16$/$64$ Monte Carlo samples\\
    \bottomrule
  \end{tabular}
\end{table}

\begin{table}[t]
  \centering
  \small
  \caption{Laplace-specific settings.}
  \label{tab:reprod-laplace}
  \begin{tabular}{p{0.35\linewidth}p{0.56\linewidth}}
    \toprule
    \textbf{Field} & \textbf{Laplace approximation} \\
    \midrule
    Base solution & Final MAP model \\
    Bayesian scope & Last layer only \\
    Curvature structure & Kronecker-factored Hessian \\
    Prior precision & Equal to MAP weight decay \\
    Posterior samples & $16$/$64$ \\
    Fit data & Clean in-distribution training data \\
    Role & Post-hoc predictive uncertainty estimation \\
    \bottomrule
  \end{tabular}
\end{table}

\subsection{Experiment families}

The main MAP study sweeps base task count, training-point fraction, and initialization seed. Specifically, it uses $n_{\mathrm{task}}\in\{8,16,32,64,128\}$, $p_{\mathrm{train}}\in\{0.3,0.4,0.5,0.6,0.7,0.8\}$, and 4 seeds. The full Laplace study is aligned exactly with this grid, using the corresponding final MAP solutions.

The primary IVON study uses the same sweep axes over base task count and training-point fraction, with 4 seeds. Unless otherwise stated, these runs use the default IVON weight decay $10^{-5}$, automatic effective sample size resolution, and $100{,}000$ optimization steps, with the same $200{,}000$-step extension for $n_{\mathrm{task}}=128$.

Inference-time context-length sweeps evaluate context sizes $1,2,\ldots,32$ while keeping the trained model fixed. These sweeps use $n_{\mathrm{task}}=64$ and $p_{\mathrm{train}}=0.8$.

Inference-time context-noise sweeps also use $n_{\mathrm{task}}=64$ and $p_{\mathrm{train}}=0.8$, with fixed context length $32$. Noise probabilities range over $\{0.0,0.1,\ldots,1.0\}$. Corruption is applied only to eligible context answer tokens; the query triplet and evaluation targets remain clean.

\subsection{Hardware allocation}

All runs use one accelerator per experiment. The main training runs use NVIDIA H200/H100/A100 accelerators while  lightweight validation use small H200 MIG partitions.
The total computational budget used for the experimental campaign is approximately $12'000$ accelerator-hours, splitted roughly equally between development and final runs.

\section{LLM use disclosure}

In accordance with the COLM policy on large language model usage, we disclose that LLM-based tools were used for software engineering tasks and editorial assistance, specifically for code implementation support and code review (beyond simple auto-completion), and rewriting or polishing text (beyond simple grammar checking).
They were not used to originate research ideas or formulate scientific claims. The authors take full responsibility for the design of the study, the experiments, the analysis, and the final manuscript.

\end{document}